\documentclass[10pt]{article} % For LaTeX2e

\usepackage[accepted]{rlj} % Should be uncommented for the camera-ready
\usepackage{amssymb}            % Defines common symbols like \mathbb R
\usepackage{mathtools}          % Extends amsmath, providing common math tools
\usepackage{mathrsfs}           % Enables \mathscr, which can work in cases that \mathcal does not
\usepackage{graphicx}           % For including images
\usepackage{subcaption}         % Allows for the use of subfigures and subcaptions
\usepackage{url}                % For displaying URLs
\usepackage{lipsum}             % For placeholder text

\usepackage{soul}
\usepackage{sgame}
\usepackage{multirow}
\usepackage{cancel}
\usepackage{wrapfig}
\usepackage{booktabs}
\usepackage{pifont}
\usepackage{amsthm}
\usepackage{thmtools}
\usepackage{thm-restate}
\usepackage{etoolbox}
\BeforeBeginEnvironment{proof}{\vspace{-0.8\baselineskip}}
\AfterEndEnvironment{proof}{\vspace{-0.8\baselineskip}}

\newcommand{\E}{\mathbb{E}}
\newcommand{\va}{\vec{a}}
\newcommand{\vphi}{\varphi}
\newcommand{\bel}[1]{b^{(#1)}}
\newcommand{\sg}{\operatorname{sg}}
\newcommand{\dgp}{d^{\vec{\pi}}_{\gamma'}}

\theoremstyle{plain}
\newtheorem{theorem}{Theorem}[section]

\theoremstyle{definition}
\newtheorem{definition}[theorem]{Definition}

\theoremstyle{remark}

\usepackage{algcompatible}
\usepackage{algorithm}

\usepackage[textsize=tiny]{todonotes}

\title{ACPO: Agent-Chained Policy Optimization for 
Multi-Agent Reinforcement Learning}

\setrunningtitle{ACPO: Agent-Chained Policy Optimization for 
Multi-Agent Reinforcement Learning}

\author{Daiki E. Matsunaga\textsuperscript{1}, Junho Na\textsuperscript{1}, Tri Wahyu Guntara\textsuperscript{2}, Scott Sanner\textsuperscript{3 6}, 
\\Pascal Poupart\textsuperscript{4 6$\dagger$}, Jongmin Lee\textsuperscript{5 7$\dagger$}, Kee-Eung Kim\textsuperscript{1 7$\dagger$}}

\emails{\{d.e.matsunaga, jhna9803, kekim\}@kaist.ac.kr, wahyu.guntara@krafton.com, ssanner@mie.utoronto.ca, ppoupart@uwaterloo.ca, jongminlee@yonsei.ac.kr}

\affiliations{
$^{1}$\textbf{KAIST}
$^{2}$\textbf{KRAFTON}
$^{3}$\textbf{University of Toronto}
$^{4}$\textbf{University of Waterloo}
$^{5}$\textbf{Yonsei University}
$^{6}$\textbf{Vector Institute}
$^{7}$\textbf{National AI Research Lab}
}

\contribution{
    We introduce the Agent-Chained Belief MDP (AC-BMDP), a formalization that combines serialization with belief augmentation to enable optimal joint policy learning under Centralized Training with Decentralized Execution (CTDE).
    }
    {
Prior work on serialization~\citep{kovarik2023factoredposg, Peralez2025simultaneousmove} showed that simultaneous MMDPs can be converted to sequential problems, but did not address the partial observability arising from unobserved preceding agent actions  under CTDE. Our formalism is also motivated by a classical result in game theory which states that any simultaneous game can be equivalently recast as a serialized decision process with imperfect information~\citep[5.2.2]{shoham2008multiagent}.  
}

\contribution{
    We develop Agent-Chained Policy Optimization (ACPO), a practical actor-critic framework where independent policy updates together constitute a single step on the joint policy gradient. In tabular settings, this converges to the optimal joint policy in the underlying Multi-Agent MDP~(MMDP) rather than to Nash Equilibria.
    }{
Alternating policy optimization methods~\citep{kuba2022hatrpo, zhong2024haml} guarantee convergence to Nash Equilibria, which can be arbitrarily suboptimal in cooperative settings. Independent policy optimization methods~\citep{lowe2017maddpg, yu2022mappo} lack convergence guarantees without strong structural assumptions such as value decomposition.
}
\contribution{
    We introduce on-policy (PPO-based) and off-policy (TD3-based) instantiations of ACPO and demonstrate empirically that they consistently outperforms state-of-the-art baselines across three standard benchmarks (Multi-Robot Warehouse, SMACv2, MA-MuJoCo), with performance gains that scale with the number of agents.
    }{
    None}

\keywords{Cooperative Multi-Agent Reinforcement Learning~(MARL), Centralized Training Decentralized Execution~(CTDE)} % Your keywords

\summary{
Cooperative tasks in Multi-Agent Reinforcement Learning~(MARL) require agents to collectively maximize a shared return. Under the Centralized Training with Decentralized Execution~(CTDE) paradigm, policy gradients have remained difficult to compute directly. Prior methods largely follow two approaches: independent factorized updates with centralized critics, which lack general joint-improvement guarantees without value decomposition assumptions, or alternating best-response updates, which can converge to suboptimal Nash Equilibria.

In this paper, we show the joint policy gradient admits an exact decentralized decomposition of per-agent terms, each formed from per-agent score functions and decentralized critics. Based on this decomposition, we develop Agent-Chained Policy Optimization~(ACPO), where actors are trained independently, with their updates together constituting a single step on the joint policy gradient.
Central to this result is a serialized view of the simultaneous joint decision in which agents commit actions one at a time, each conditioning on a belief over preceding actions that ties the independent per-agent updates into a single joint step.
We evaluate on-policy and off-policy instantiations of ACPO on Multi-Robot Warehouse, SMACv2, and MA-MuJoCo, where it outperforms strong baselines, with the gap widening as the number of agents grows.
}

\begin{document}

\makeCover  % Create the cover page
\maketitle  % Make the title section
\begingroup
\renewcommand{\thefootnote}{}%
\footnotetext{$\dagger$ Equal advising.}%
\endgroup
\begin{abstract}
Cooperative tasks in Multi-Agent Reinforcement Learning~(MARL) require agents to collectively maximize a shared return. Under the Centralized Training with Decentralized Execution~(CTDE) paradigm, policy gradients have remained difficult to compute directly. Prior methods largely follow two approaches: independent factorized updates with centralized critics, which lack general joint-improvement guarantees without value decomposition assumptions, or alternating best-response updates, which can converge to suboptimal Nash Equilibria.
In this paper, we show the joint policy gradient admits an exact decentralized decomposition of per-agent terms, each formed from per-agent score functions and decentralized critics. Based on this decomposition, we develop Agent-Chained Policy Optimization~(ACPO), where actors are trained independently, with their updates together constituting a single step on the joint policy gradient.
Central to this result is a serialized view of the simultaneous joint decision in which agents commit actions one at a time, each conditioning on a belief over preceding actions that ties the independent per-agent updates into a single joint step.
We evaluate on-policy and off-policy instantiations of ACPO on Multi-Robot Warehouse, SMACv2, and MA-MuJoCo, where it outperforms strong baselines, with the gap widening as the number of agents grows.
\end{abstract}

%%%%%%%%%%%%%%%%%%%%%%%%%%%%%%%%%%%%%%%%%%%%%%%%%%%%%%%%%%%%%%%%
%% Introduction
%%%%%%%%%%%%%%%%%%%%%%%%%%%%%%%%%%%%%%%%%%%%%%%%%%%%%%%%%%%%%%%%
\section{Introduction}
Cooperative multi-agent reinforcement learning (MARL) seeks decentralized
policies that jointly maximize a shared return, in domains ranging from
autonomous vehicle fleets~\citep{zhang2024multiagentreinforcementlearningautonomous}
and traffic signal control~\citep{chu2020traffic} to fleet
management~\citep{lin2018fleet}, power
networks~\citep{wang2021powernetworks}, and Large Language
Models~\citep{wu2024autogen, liu2025llmcollaborationmultiagentreinforcement}.
Underlying every cooperative MARL problem is a single-agent Markov
Decision Process over the joint action space~(Multi-Agent MDP)~\citep{boutilier1996mmdp}, whose optimal policy maximizes the shared
return. Single-agent policy gradient
methods~\citep{ddpg2016, schulman15trpo,schulman2017proximalpolicyoptimizationalgorithms} extend to this MDP in
principle,
with the joint policy playing the role of a single centralized
agent's policy. In real-world deployments, however, this joint policy
must factorize across agents and each agent must act on its own information
without inter-agent communication. This is the setting of Centralized
Training with Decentralized Execution
(CTDE)~\citep{lowe2017maddpg,Foerster2018coma}.

Computing the Multi-Agent MDP~(MMDP) policy gradient under decentralized execution has
proven to be a challenge. Existing CTDE methods either do not handle the non-stationarity incurred by independent policy updates or solve a different problem.
\emph{Independent policy optimization}
(MAPPO~\citep{yu2022mappo}, MADDPG~\citep{lowe2017maddpg}) trains each
decentralized actor against a centralized critic conditioned on the
joint action. As actors are trained independently without accounting for the non-stationarity incurred by the other agents' changing policies, joint improvement guarantees do not hold in general~(Appendix~\ref{appendix:limitations_independent_policy_updates}), without making certain assumptions on value decomposition.
\emph{Alternating policy optimization}
(HATRPO/HAPPO~\citep{kuba2022hatrpo, zhong2024haml}) sidesteps the
MMDP by updating each agent against a per-agent reduced MDP
with the others' policies fixed. This monotonically improves the joint
return at each step, but the underlying problem is now a sequence of
unilateral best-response problems, and the fixed point is a Nash
Equilibrium that can be arbitrarily far from the optimal joint
policy\footnote{In game theory terminology, the \emph{optimal joint
policy} in cooperative MARL is the social optimum that maximizes
welfare.}~(Table~\ref{matrix_game_policy_values_main_text}).

In this paper, we show that the policy gradient in the MMDP can be computed directly under the decentralized execution constraint, via an exact decomposition into decentralized per-agent terms. The decomposition is enabled by augmenting each agent's state with a belief over the actions committed by preceding agents in a serialized view of the joint decision. We prove the \emph{Multi-Agent Policy Gradient Decomposition Theorem}~(Theorem~\ref{thm-mapg-decomposition}): the joint policy gradient decomposes into per-agent terms, each involving only a per-agent score function and a per-agent critic, with the  updates coupled through beliefs. Unlike value decomposition methods~\citep{sunehag2018vdn, rashid2018qmix,
wang2021dop, zhang2021fop}, this decomposition is made without any structural assumption on the joint value function. The per-agent critics are determined by a chained Bellman recursion in which each agent bootstraps from the next
agent in the chain (Figure~\ref{figure:value_network_architecture}).

We instantiate the decomposition as \emph{Agent-Chained Policy Optimization} (ACPO), a general actor-critic framework with on-policy PPO-based~\citep{schulman2017proximalpolicyoptimizationalgorithms} and off-policy TD3-based~\citep{DBLP:conf/icml/FujimotoHM18} variants. Each agent updates its actor against its own decentralized critic, and the independent per-agent updates collectively conduct a single gradient step on the MMDP joint policy.
On Multi-Robot Warehouse, SMACv2, and MA-MuJoCo, ACPO outperforms strong baselines, with the gap widening substantially as the number of agents grows.

%%%%%%%%%%%%%%%%%%%%%%%%%%%%%%%%%%%%%%%%%%%%%%%%%%%%%%%%%%%%%%%%
%% Background
%%%%%%%%%%%%%%%%%%%%%%%%%%%%%%%%%%%%%%%%%%%%%%%%%%%%%%%%%%%%%%%%
\section{Background}
\subsection{Multi-Agent MDP}

We consider a cooperative multi-agent reinforcement learning (MARL) setting
with $\mathcal{{N}}=\left\{ 1,\ldots,N\right\} $ agents, formally modeled as a Multi-Agent Markov Decision Process~(MMDP)~\citep{boutilier1996mmdp}. At time
step $t$, each agent $i\in\mathcal{N}$ simultaneously takes action $a_{t}^{(i)}\in \mathcal A^{(i)}$ 
sampled from individual policy $\pi^{(i)}(a_{t}^{(i)}\mid s_t)$ where $s_t \in \mathcal S$ is the state.
The state transition is Markovian,
i.e. the next state $s_{t+1}$ is given by transition function $T(s_{t+1}|s_{t},\vec{a}_{t})$
where $\vec{a}_{t}$ is the joint action $\vec{a}_{t}=[a_{t}^{(1)},\ldots,a_{t}^{(N)}]$.
Each agent receives shared reward $r_{t}$ generated by the common
reward function $R(s_{t},\vec{a}_{t})$. 

The goal of cooperative MARL is to find a set of agent policies
$\vec{\pi}=[\pi^{(1)},\ldots,\pi^{(N)}]$ that maximize the total
expected return $J(\vec \pi)=\mathbb{E}_{\vec\pi}\left[\sum_{t}\gamma^{t}r_{t}\right]$
where $\gamma$ is the discount factor. 
Following standard practice in cooperative MARL, we adopt the Centralized Training with Decentralized Execution (CTDE) setting where policies are trained with shared information, but executed independently\footnote{See Appendix~\ref{appendix:ctde} for further discussion of CTDE.}.

\subsection{Previous CTDE Approaches in Cooperative MARL}
\label{subsection:previous_approaches}

\paragraph{{Independent Policy Optimization}}

One way to solve MMDPs is to apply single-agent RL on the joint action space with independently factorized policies~\citep{lowe2017maddpg, yu2022mappo}. Each policy is trained via an objective of the form $J(\pi^{(i)})=\mathbb{E}_{\pi^{(i)}}\mathbb{E}_{\vec\pi^{-i}}[Q(s,a^{(i)},\vec a^{-i})]$, where $Q$ is the centralized action-value function, and $\vec a^{-i}, \vec\pi^{-i}$ are the joint actions and joint policies excluding agent $i$. The $N$ agents each maximize $J(\pi^{(i)})$ independently. MADDPG~\citep{lowe2017maddpg} instantiates this with deterministic policy gradients via a learned $Q$~\citep{ddpg2016}, and MAPPO~\citep{yu2022mappo} uses a centralized state-value function with GAE~\citep{schulman2016gae,schulman2017proximalpolicyoptimizationalgorithms} to compute advantages $A(s,a^{(i)},\vec a^{-i})$. Both methods can train policies in parallel and scale well in practice. However, there is no general guarantee of joint policy improvement even in tabular settings. As we show in Appendix~\ref{appendix:limitations_independent_policy_updates}, simple counter-examples exist where these methods diverge. Convergence holds only in restricted settings where the optimal action-value decomposes such that independent updates align with joint improvement.

\paragraph{Alternating Policy Optimization}

A second approach defines a reduced MDP for each agent and learns the best response policy, yielding monotonic improvement of the joint policy and convergence to a Nash Equilibrium (NE)~\citep{bertsekas2020multiagentvalueiterationalgorithms}. Each agent $i$ alternates and solves $\langle\mathcal{S},\mathcal{A}^{(i)},T^{(i)},R^{(i)}\rangle$, where $T^{(i)}(s_{t+1}\mid s_t,a^{(i)}_t):=\mathbb{E}_{\vec\pi^{-i}}[T(s_{t+1}\mid s_t,a^{(i)}_t,\vec a^{-i})]$ and $R^{(i)}(s_t,a^{(i)}_t):=\mathbb{E}_{\vec\pi^{-i}}[R(s_t,a^{(i)}_t,\vec a^{-i})]$ marginalize over the other agents' actions under their most recent policies $\vec\pi^{-i}$. When $\pi^{(i)}$ is being updated, the previous agent policies $\vec\pi^{<i}$ have already been updated. HATRPO~\citep{kuba2022hatrpo} and HAPPO~\citep{zhong2024haml} instantiate alternating best-response updates with trust-region~\citep{schulman15trpo} or clipped-surrogate~\citep{schulman2017proximalpolicyoptimizationalgorithms} optimization. The advantage $A^{(i)}(s,a^{(i)})$ is well-defined in the reduced MDP, and bounded sequential updates yield monotonic improvement.
The fixed point, however, is only an NE, which can be arbitrarily suboptimal in cooperative settings. Without a mechanism for joint updates, agents have no way to escape suboptimal NEs. Sequential updates also scale poorly since training time increases with the number of agents.

\subsection{Limitations of Existing CTDE Approaches}
\label{subsection:limitation_of_previous_approaches}
\begin{table*}[t!]
\scriptsize{
% \captionsetup{font=scriptsize}
\centering
\begin{tabular}{c|c|c|c|}
\multicolumn{1}{c}{ } & \multicolumn{1}{c}{ $A$} & \multicolumn{1}{c}{$B$} & \multicolumn{1}{c}{ $C$} \\
 \cline{2-4}
$A$ & $5$ &  $-20$ & $-20$\\
 \cline{2-4}
$B$ & $-20$ &  $10$ & $-20$ \\
\cline{2-4}
$C$ & $-20$ &  $-20$ & $20$\\
\cline{2-4}
\multicolumn{1}{c}{ } &\multicolumn{3}{c}{3x3 Matrix Game} \\
\end{tabular}
% \quad
\\
%------------------------------------------------
\begin{tabular}{c|c|c|c|}
 \multicolumn{1}{c}{ } &  \multicolumn{1}{c}{ $A$} & \multicolumn{1}{c}{ $B$} & \multicolumn{1}{c}{ $C$} \\
 \cline{2-4}
$A$ &$1$ & $0$ & $0$ \\
\cline{2-4}
 $B$ &$0$ & $0$ & $0$ \\
\cline{2-4}
 $C$ & $0$ & $0$ & $0$ \\
\cline{2-4}
 \multicolumn{1}{c}{ } &\multicolumn{3}{c}{Independent PO} \\
\end{tabular}
\quad
%------------------------------------------------
\begin{tabular}{|c|c|c|}
 \multicolumn{1}{c}{ $A$} & \multicolumn{1}{c}{ $B$} & \multicolumn{1}{c}{ $C$} \\
 \cline{1-3}
$1$ & $0$ & $0$ \\
\cline{1-3}
 $0$ & $0$ & $0$ \\
\cline{1-3}
 $0$ & $0$ & $0$ \\
\cline{1-3}
\multicolumn{3}{c}{Alternating PO} \\
\end{tabular}
\quad
%------------------------------------------------
\begin{tabular}{|c|c|c|}
 \multicolumn{1}{c}{ $A$} & \multicolumn{1}{c}{ $B$} & \multicolumn{1}{c}{ $C$} \\
 \cline{1-3}
$1$ & $0$ & $0$ \\
\cline{1-3}
 $0$ & $0$ & $0$ \\
\cline{1-3}
 $0$ & $0$ & $0$ \\
\cline{1-3}
\multicolumn{3}{c}{HASPI ($\alpha$=1)} \\
\end{tabular}
\quad
%------------------------------------------------
\begin{tabular}{|c|c|c|}
 \multicolumn{1}{c}{ $A$} & \multicolumn{1}{c}{ $B$} & \multicolumn{1}{c}{ $C$} \\
 \cline{1-3}
$0$ & $0$ & $0$ \\
\cline{1-3}
 $0$ & $0$ & $0$ \\
\cline{1-3}
 $0$ & $0$ & $1$ \\
\cline{1-3}
\multicolumn{3}{c}{HASPI ($\alpha$=5)} \\
\end{tabular}
\quad
%------------------------------------------------
\begin{tabular}{|c|c|c|}
 \multicolumn{1}{c}{ $A$} & \multicolumn{1}{c}{ $B$} & \multicolumn{1}{c}{ $C$} \\
 \cline{1-3}
$0$ & $0$ & $0$ \\
\cline{1-3}
 $0$ & $0$ & $0$ \\
\cline{1-3}
 $0$ & $0$ & $1$ \\
\cline{1-3}
\multicolumn{3}{c}{ACPO (Ours) } \\
\end{tabular}
% \quad
\caption{A 3x3 Matrix Game and converged policy values when initialized to $\pi^{(1)}(A)=\pi^{(2)}(A)=0.6$. ACPO~(Ours) converges to the  optimal joint policy which selects $(C, C)$ regardless of the initialization. }
\label{matrix_game_policy_values_main_text}
}
% \vspace{-10pt}
\end{table*}

% \vspace{-0.33cm}
Consider the simple Matrix Game in Table~\ref{matrix_game_policy_values_main_text} with 2 agents and 3 actions, which was considered in \citet{liu2024hasac}.  
The three NEs are $(A, A), (B, B), (C, C)$, since there is no incentive for either agent to change its action at each of those NEs. $(C,C)$ is a social optimum of the game since it achieves the highest return. As shown in Table~\ref{matrix_game_policy_values_main_text},  independent policy optimization~(Independent PO) and alternating policy optimization~(Alternating PO), which are the foundations for MAPPO and HAPPO, respectively, cannot escape suboptimal NEs.
The only way they can converge to $(C,C)$ is for the policy to be initialized with high probability towards $(C,C)$. 
Similarly, Heterogeneous-Agent Soft Policy Iteration~(HASPI)~\citep{liu2024hasac} can only find the optimal policy for specific combinations of the entropy parameter $\alpha$ and the initialization of the policy (e.g., $\alpha=5$ and $\pi^{(1)}(A)=\pi^{(2)}(A)=0.6$), if it happens to coincide with the QRE. As with NEs, there is no guarantee that the QRE will coincide with the optimal policy.

Our goal is to derive a principled algorithm which directly targets the optimal joint policy in the MMDP. 
In Section~\ref{section:acpo}, we present Agent-Chained Policy Optimization~(ACPO) which converges to the optimal joint policy under tabular domains. Detailed analysis on how tabular ACPO solves the Matrix Game (regardless of the policy initialization)
is provided in Appendix~\ref{appendix:exact_calculation_matrix}.

\section{Agent-Chained Belief MDP~(AC-BMDP)}
\label{section:ac_mdp}
A classical result in game theory states that any simultaneous game can be equivalently recast as a serialized decision process with imperfect information~\citep[5.2.2]{shoham2008multiagent}. This equivalence holds under any agent order. In this section, we instantiate this in the context of cooperative MARL and formalize it as the AC-BMDP. 

We start by decomposing each environment step $t$ into $N$ \emph{micro-steps} indexed by the acting agent $i \in \{1,\dots,N\}$. Let $\vec a^{<i}_t := [a^{(1)}_t,\dots,a^{(i-1)}_t]$ denote the actions committed by agents $1$ through $i-1$ within the current environment step, with $\vec a^{<1}_t=\emptyset$. The serialized fully-observed state is $[s_t,\vec a^{<i}_t]$. The micro-step transitions are deterministic and rewards are $0$ until the final agent commits~(Details in Appendix~\ref{appendix:subsection_serialization_proof}).

\smallskip
\noindent\textbf{State Augmentation with Belief over Preceding Actions.}
We now make explicit the structure of standard policy parameterizations in RL. Each agent's policy decomposes into a deterministic mapping from state to action distribution, followed by stochastic sampling,
\[
\pi^{(i)}:\mathcal S\to \Delta(\mathcal A^{(i)}), \qquad
\varphi^{(i)}_t=\pi^{(i)}(s_t), \qquad
a^{(i)}_t \sim \varphi^{(i)}_t.
\]
The \emph{action distribution} $\varphi^{(i)}_t$ can be the probability vector of a softmax policy for discrete action spaces, or the mean and standard deviation of a Gaussian for continuous action spaces.\footnote{This parameterization encompasses both algorithms for learning stochastic policies~\citep{schulman2017proximalpolicyoptimizationalgorithms} and deterministic target policies with a stochastic behavior policy~\citep{DBLP:conf/icml/FujimotoHM18}.}

While the realized action $a^{(i)}_t$ is private, the action distribution $\varphi^{(i)}_t$ is deterministically computed from the shared state and is therefore accessible to other agents that have knowledge of $\pi^{(i)}$, with no inter-agent communication required. This makes $\varphi^{(i)}_t$ fully compatible with decentralized execution.\footnote{Knowledge of other agents' policies is a standard assumption in game theory. In practice, $\vec\varphi^{<i}_t$ can be computed exactly by querying a shared policy with each preceding agent's index, or by maintaining copies of the preceding agents' policies. In partially observable settings it must be approximated (Section~\ref{section:practical_implementation}).}

Let $\vec\varphi^{<i}_t := [\varphi^{(1)}_t,\dots,\varphi^{(i-1)}_t]$. Since actions are sampled independently across agents, the belief over preceding realized actions factorizes as
\[
b^{(i)}_t(\vec a^{<i}_t)
\;\triangleq\; \Pr(\vec a^{<i}_t \mid s_t,\vec\varphi^{<i}_t)
\;=\; \prod_{j=1}^{i-1}\varphi^{(j)}_t(a^{(j)}_t),
\]
with $b^{(1)}_t = \emptyset$ for agent 1. The belief is fully determined by the compact sufficient statistic $\vec\varphi^{<i}_t$.

\smallskip
\begin{restatable}{definition}{AC-BMDP}
\label{def:ac-bmdp}(AC-BMDP)
We define an MDP with augmented state and action spaces 

$\langle \mathcal S \times \mathcal B^{(i)}, \Phi^{(i)}, R, T, \gamma' \rangle$, where $\mathcal B^{(i)}$ is the space of beliefs over $\vec a^{<i}$ (equivalently represented by $\vec\varphi^{<i}$). The state at micro-step $i$ is $[s_t,b^{(i)}_t]$, with $b^{(1)}_t$ empty. The action space is the set of action distributions $\Phi^{(i)}:=\Delta(\mathcal A^{(i)})$. Agent $i$'s policy selects $\varphi^{(i)} \in \Phi^{(i)}$, and the subsequent sample $a^{(i)} \sim \varphi^{(i)}$ is treated as part of the environment. This makes the action publicly observable, allowing subsequent agents to update their beliefs.
\end{restatable}

\emph{Reward.} Rewards follow serialization but are marginalized over the unobserved realized actions,

\[
R\!\left([s_t,b^{(N)}_t], \varphi^{(N)}_t\right)=
\sum_{\vec a^{<N}_t} b^{(N)}_t(\vec a^{<N}_t)\sum_{a^{(N)}_t}\varphi^{(N)}_t(a^{(N)}_t)\, R(s_t,\vec a_t)
\]
Rewards for agents $i\in\{1,\dots,N-1\}$ are always 0.

\emph{Transition.} For $i\in\{1,\dots,N-1\}$, the environment state is unchanged and only the belief is updated deterministically,

\[
T\!\left([s_t,b^{(i+1)}_t]\mid [s_t,b^{(i)}_t],\varphi^{(i)}_t\right)
=
\begin{cases}
1, & b^{(i+1)}_t=\tau([s_t,b^{(i)}_t],\varphi^{(i)}_t),\\
0, & \text{otherwise},
\end{cases}
\]
where $\tau$ is the belief update~(Appendix~\ref{appendix:belief_update_rule}).
At $i=N$, the environment transitions according to the original MMDP dynamics marginalized over the joint action,
\[
T\!\left([s_{t+1},\emptyset]\mid [s_t,b^{(N)}_t],\varphi^{(N)}_t\right)
=
\sum_{\vec a^{<N}_t} b^{(N)}_t(\vec a^{<N}_t)\sum_{a^{(N)}_t}\varphi^{(N)}_t(a^{(N)}_t)\, T(s_{t+1}\mid s_t,\vec a_t).
\]

\emph{Discount factor.} The discount factor is $\gamma' = \gamma^{1/N}$ for each micro-step.

\emph{Belief update.} The serialized micro-step transition deterministically appends the realized action, so $\tau$  simplifies to the product-form update

\[
b^{(i+1)}_t(\vec a^{<i}_t,a^{(i)}_t)= b^{(i)}_t(\vec a^{<i}_t)\,\varphi^{(i)}_t(a^{(i)}_t).
\]
Equivalently, the belief-state can be represented by the preceding action distributions via the deterministic append $\vec\varphi^{<i+1}_t=[\vec\varphi^{<i}_t,\varphi^{(i)}_t]$, with $b^{(i)}_t$ recovered through the factorization above. 

The Bellman operator under the AC-BMDP is given by the following.

\begin{restatable}{definition}{DefbellmanOperator}
\label{def:bellman_operators}(Agent-Chained Bellman Operators)
    \begin{flalign*}
    &\forall i \in \{1, \dots, N-1 \} \\
    &(\mathcal{T}^{\vec{\pi}}Q^{(i+1)})([s, b^{(i)}], \varphi^{(i)})
        :=\gamma' \mathbb{E}_{\substack{b^{(i+1)} = T([s, b^{(i)}], \varphi^{(i)})\\\varphi^{(i+1)} \sim \pi^{(i+1)}(\cdot \mid s, b^{(i+1)})
        }} \left[Q^{(i+1)}\left([s, b^{(i+1)}], \varphi^{(i+1)}\right) \right]  
        % \text{if } i \in \{1, \dots, N-1 \},
        \\
    &(\mathcal{T}^{\vec{\pi}}Q^{(1)})([s, b^{(N)}], \varphi^{(N)}) :=R\left([s, b^{(N)}], \varphi^{(N)}\right)
          + \gamma' \mathbb{E}_{\substack{s' \sim T(\cdot \mid [s, b^{(N)}], \varphi^{(N)})\\\varphi^{(1)} \sim \pi^{(1)}(\cdot \mid s')
        }}\left[Q^{(1)}\left(s', \varphi^{(1)}\right) \right]
    \end{flalign*}
\end{restatable}
The key structure is \emph{agent-chaining}. The target for $Q^{(i)}$ is $Q^{(i+1)}$, giving each agent its own decentralized value function without requiring value decomposition assumptions~\citep{sunehag2018vdn, rashid2018qmix, zhang2021fop} or convexity assumptions~\citep{wang2022shaq}. Each $Q^{(i)}([s, b^{(i)}], \varphi^{(i)})$ captures how agent $i$'s action affects the next agent's value, providing a natural mechanism for credit assignment.

\smallskip
\noindent\textbf{Theoretical Properties of the AC-BMDP.}
The AC-BMDP is an equivalent serialized representation of the MMDP with the state and action spaces augmented. The optimal policy of the AC-BMDP is also optimal in the MMDP. 
It also follows that in tabular settings, running policy iteration (\textit{Agent-Chained Policy Iteration}) on the AC-BMDP converges to the optimal joint policy in the underlying Multi-Agent MDP~(MMDP) rather than to Nash Equilibria. We formalize these  results in  Appendix~\ref{appendix:policy_iteration_proofs}.
\section{Agent-Chained Policy Optimization~(ACPO)}
\label{section:acpo}
Section~\ref{section:ac_mdp} recasts the MMDP as the AC-BMDP. We now show that this construction yields an exact decomposition of the joint policy gradient into $N$ independent per-agent terms, each computable from a per-agent score function and a decentralized critic. The decomposition is the foundation of ACPO, where every agent improves its own policy against its own critic independently, and the updates collectively constitute a gradient step on the joint MMDP objective.

\paragraph{Agent-Chained Policy Gradient}
Applying the single-agent policy gradient theorem to the AC-BMDP yields
\begin{equation}
\nabla_\theta J^{\text{AC}}(\vec{\pi}_\theta)
= \mathbb{E}_{[s,b^{(i)}] \sim d_{\gamma'}^{\vec \pi},\, \varphi^{(i)} \sim \pi^{(i)}_\theta}
\!\left[\nabla_\theta \log \pi^{(i)}_\theta(\varphi^{(i)} \mid s, b^{(i)}) \cdot Q^{(i)}([s, b^{(i)}], \varphi^{(i)})\right],
\label{eq:lifted_pg}
\end{equation}
where $d_{\gamma'}^{\vec \pi}$ denotes the occupancy measure in the AC-BMDP under a policy $\vec \pi$. 

The MMDP joint policy gradient $\nabla_\theta J(\vec\pi_\theta)$ can be written as a sum of $N$ such per-agent terms.

\begin{restatable}{theorem}{ThmMapgDecomposition}
\label{thm-mapg-decomposition}(Multi-Agent Policy Gradient Decomposition Theorem)
\begin{equation}
\nabla_\theta J(\vec{\pi}_\theta)
\;=\; \frac{1}{(\gamma')^{N-1}}
\sum_{i=1}^N \mathbb{E}_{\substack{[s, b^{(i)}] \sim d_{\gamma'}^{\vec \pi} \\ \varphi^{(i)} \sim \pi^{(i)}_\theta(\cdot \mid s, b^{(i)})}}
\!\left[
\nabla_\theta \log \pi^{(i)}_\theta\!\left(\varphi^{(i)} \,\big|\, s, b^{(i)}\right)
\cdot
Q^{(i)}\!\left([s, b^{(i)}], \varphi^{(i)}\right)
\right].
\label{eq:pg-decomposition}
\end{equation}
\end{restatable}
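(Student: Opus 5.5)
The plan is to reduce the statement to the single-agent policy gradient theorem applied to the AC-BMDP, i.e.\ to Eq.~\eqref{eq:lifted_pg}. Two things then need to be shown: that $J^{\text{AC}}(\vec\pi_\theta)$ and $J(\vec\pi_\theta)$ differ only by the constant factor $(\gamma')^{N-1}$, and that the lifted gradient, read as an expectation over the micro-step occupancy measure, splits into a sum over micro-step indices $i$.

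\textbf{Step 1: relate the objectives.} Unroll one macro step of the AC-BMDP starting from $[s_t,\emptyset]$. Micro-steps $1,\dots,N-1$ give zero reward and apply the deterministic belief updates $\tau$. The reward is collected at micro-step $N$, which is micro-step index $tN+N-1$ counted from time $0$, so it is discounted by $(\gamma')^{tN+N-1}=\gamma^t(\gamma')^{N-1}$. By the product-form belief, $b^{(N)}_t\cdot\varphi^{(N)}_t$ is exactly the product distribution $\prod_j\varphi^{(j)}_t$ over $\vec a_t$. Hence the marginalized reward and transition at micro-step $N$ coincide with $\mathbb{E}_{\vec a\sim\vec\pi(s_t)}[R(s_t,\vec a)]$ and $\mathbb{E}_{\vec a\sim\vec\pi(s_t)}[T(\cdot\mid s_t,\vec a)]$, where we identify $\pi^{(i)}(\cdot\mid s,b^{(i)})$ with the MMDP policy; beliefs do not affect the action distribution. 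An induction on $t$ shows that the AC-BMDP distribution of $s_t$ equals the MMDP state distribution under $\vec\pi$. It follows that $J^{\text{AC}}(\vec\pi_\theta)=(\gamma')^{N-1}J(\vec\pi_\theta)$ for all $\theta$. Differentiating gives $\nabla_\theta J=(\gamma')^{-(N-1)}\nabla_\theta J^{\text{AC}}$, which accounts for the prefactor.

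\textbf{Step 2: split the lifted gradient.} Write Eq.~\eqref{eq:lifted_pg} in its unnormalized form,
\[
\nabla_\theta J^{\text{AC}}=\sum_{k\ge0}(\gamma')^k\,\mathbb{E}\bigl[\nabla_\theta\log\pi^{(i_k)}_\theta(\varphi^{(i_k)}\mid s,b^{(i_k)})\,Q^{(i_k)}([s,b^{(i_k)}],\varphi^{(i_k)})\bigr],
\]
where $i_k=(k \bmod N)+1$ is the acting agent at micro-step $k$. Group the micro-steps by agent index $i$. The augmented state space is a disjoint union over $i$: the $[s,b^{(i)}]$ are distinguished by belief length. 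Therefore $d^{\vec\pi}_{\gamma'}$ restricts to a measure on each slice $\mathcal S\times\mathcal B^{(i)}$, and the sum over $k$ becomes $\sum_{i=1}^N$ of the expectation under that restriction. Here the $Q^{(i)}$ are the fixed points of the Agent-Chained Bellman operators of Definition~\ref{def:bellman_operators}. These are exactly the action-values of the AC-BMDP at micro-step $i$, since the operator is the standard policy-evaluation operator with zero intermediate reward. This yields Eq.~\eqref{eq:pg-decomposition}, with $d_{\gamma'}^{\vec\pi}$ understood as the unnormalized discounted visitation, consistent with Eq.~\eqref{eq:lifted_pg}.

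\textbf{Main obstacle.} I expect the difficulty to lie in the bookkeeping of Step 1 rather than in Step 2. Specifically, I need to check three things. First, the micro-step discount assignment must give exactly $(\gamma')^{N-1}$, independent of $t$. Second, the policy gradient theorem must legitimately apply even though the AC-BMDP action is the distribution $\varphi^{(i)}$ itself. I would handle this by treating $\pi^{(i)}_\theta(\cdot\mid s,b)$ as a (possibly Dirac or stochastic) policy over $\Phi^{(i)}$ with a score function, and by pushing the sampling $a^{(i)}\sim\varphi^{(i)}$ into the environment as in Definition~\ref{def:ac-bmdp}. Third, the $\theta$-dependence must enter only through the policy and not through the transition. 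This holds because $\tau$ is just an append and does not depend on $\theta$. If the score-function form is problematic for deterministic $\varphi$-selection, I would fall back to writing the policy gradient directly in the MMDP with the joint score $\sum_i\nabla\log\pi^{(i)}(a^{(i)}\mid s)$. I would then apply the tower property over the serialized order to replace $Q(s,\vec a)$ in each agent's term by its conditional expectation given $\vec a^{\le i}$, which is $Q^{(i)}$ up to $(\gamma')^{N-i}$ factors. I would then verify that these factors combine with the occupancy weights to reproduce the common prefactor.
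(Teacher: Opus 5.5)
Your proposal is correct and is essentially the paper's proof with the two steps taken in the opposite order. The paper first splits the lifted AC-BMDP gradient along the deterministically cycling agent index, with per-agent visitation $(\gamma')^{i-1}\sum_t\gamma^t\Pr(s_t=s,b^{(i)}_t=b^{(i)})$, and then uses $J^{\text{AC}}=(\gamma')^{N-1}J(\vec\mu_\theta)$, where $\vec\mu_\theta$ is the MMDP policy induced by chaining the $\pi^{(i)}$; this is the precise form of your ``identify $\pi^{(i)}$ with the MMDP policy'', since each $\varphi^{(i)}$ does depend on $b^{(i)}$ and only the induced joint-action law depends on $s$ alone. Your fallback is not needed, and it would not yield the stated form, because $\mathbb{E}[Q(s,\vec a)\mid \vec a^{\le i}]$ conditions on realized actions with successors' distributions held fixed, whereas $Q^{(i)}([s,b^{(i)}],\varphi^{(i)})$ is indexed by the distribution $\varphi^{(i)}$ and lets successors react to it through their beliefs.
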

\begin{proof}
    See Appendix~\ref{appendix:decomposition-proof}.
\end{proof}

The belief $b^{(i)}$ makes each term in \eqref{eq:pg-decomposition} a function of agent $i$ alone: $Q^{(i)}([s, b^{(i)}], \varphi^{(i)})$ depends on preceding agents only through their action distributions encoded in $b^{(i)}$, and never on their realized actions. Each per-agent gradient is computable from agent $i$'s own decentralized critic. The $N$ updates therefore can be conducted simultaneously and the collective update is a single gradient step on the joint MMDP objective.

The PPO instantiation we present next is a practical approximation that replaces the explicit trust region  with a clipped surrogate~\citep{schulman2017proximalpolicyoptimizationalgorithms}, just as PPO approximates TRPO in the single-agent case.

\subsection{ACPPO: PPO Instantiation of ACPO}
\begin{figure*}[t!]%[ht]
\begin{center}
 \centerline{\includegraphics[width=\textwidth]{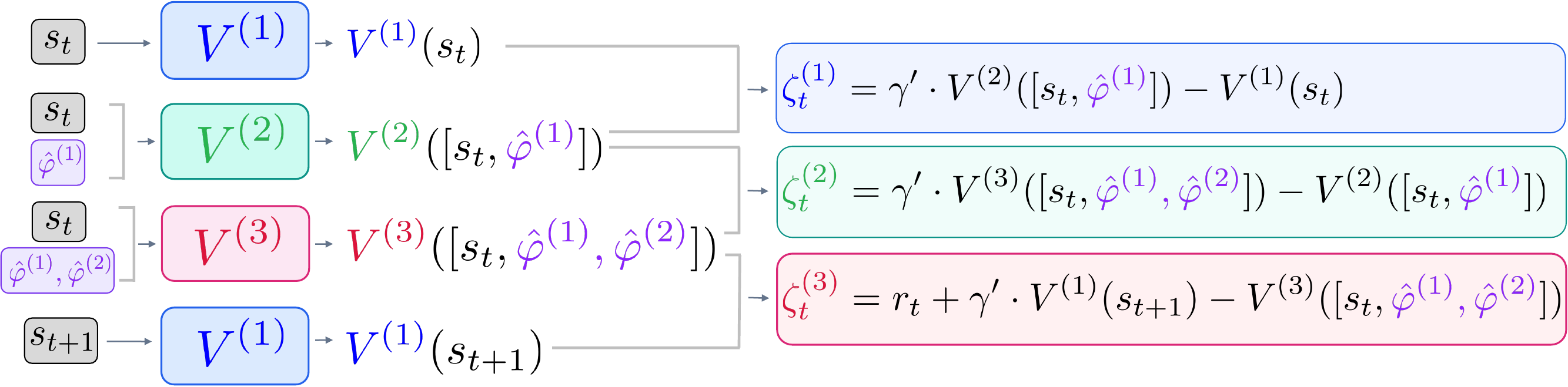}}
\caption{ Value Network Architecture for the PPO instantiation of ACPO (ACPPO)   with $N=3$ agents. Each agent's value function $V^{(i)}$ takes the state augmented with preceding belief estimates as input. TD residuals chain across  \textcolor{blue}{Agent 1},  \textcolor{green!50!black}{Agent 2}, and \textcolor{red}{Agent 3}. }
\label{figure:value_network_architecture}
\end{center}\vspace{-0.8cm}
\end{figure*}

\paragraph{Agent-Chained TD Residuals.}
Following the Bellman operators in Definition~\ref{def:bellman_operators}, the Temporal Difference~(TD) residuals are
$$
\begin{aligned}
\zeta_t^{(i)}
&=\gamma' V^{(i+1)}\left([s_{t}, b_t^{(i+1)}]\right) - V^{(i)}\left([s_t, b_t^{(i)}]\right),   \quad \forall i\in\{1, \dots, N-1 \},\\
\zeta_t^{(N)}
&=
R\left([s_t, b_t^{(N)}], a^{(N)}_t\right)
+\gamma' V^{(1)}(s_{t+1})  - V^{(N)}\left([s_t, b_t^{(N)}]\right) \\
&\approx
r_t
+\gamma' V^{(1)}(s_{t+1})
- V^{(N)}\left([s_t, b_t^{(N)}]\right),
\end{aligned}
$$
where the reward $R$ is zero for any agent $i \in \{1, \dots, N-1\}$ during micro-steps (see Figure~\ref{figure:value_network_architecture}). For intermediate agents $i < N$, the residual $\zeta_t^{(i)}$ measures how agent $i$'s action shifts the value for the next agent in the chain. Only the last agent's residual $\zeta_t^{(N)}$ incorporates the environment reward and bootstraps to $V^{(1)}$ at the next timestep.

\paragraph{Agent-Chained Generalized Advantage Estimation.}
The advantage is an exponentially-weighted sum over the chained TD residuals,
$$  A_{t}^{(i)}([s_t, b^{(i)}_t], a^{(i)}_t)=
\sum_{j=i}^N  (\gamma' \lambda')^{j-i} \zeta^{(j)}_t +
\sum_{k=1
}^\infty \sum_{j=1}^N (\gamma' \lambda')^{k N+j-i} \zeta_{t+k}^{(j)},
$$
with the full derivation provided in Appendix~\ref{appendix:advantage_computation}. The first sum aggregates TD residuals within the current timestep $t$ from agent $i$ through agent $N$. The second sum extends across future timesteps, cycling through all $N$ agents at each step.

\paragraph{PPO Objective}
Using the advantage estimates, the PPO objective can be written as a variant of policy gradient with a clipped probability ratio.

However, the score function in Eq.~\ref{eq:lifted_pg} is over the lifted action space $\Delta(\mathcal{A}^{(i)})$, where the importance ratios required by trust-region and clipped-surrogate optimizers are not well-defined. We instead optimize a surrogate objective defined over $\mathcal{A}^{(i)}$.

\begin{equation}
\label{eq:approx_pg_objective}
 \nabla_\theta \tilde J^{\text{AC}}(\vec\pi_\theta)
= \mathbb{E}_{[s,b^{(i)}] \sim d_{\gamma'}^{\vec \pi},\, a^{(i)} \sim \pi^{(i)}_\theta(\cdot \mid s, b^{(i)})}
\!\left[\nabla_\theta \log \pi^{(i)}_\theta(a^{(i)} \mid s, b^{(i)}) \cdot Q^{(i)}([s, b^{(i)}], a^{(i)})\right],
\end{equation}
    where the full derivation is provided in Appendix~\ref{appendix:final_objective_justification}.
The PPO objective can now be written with well-defined importance sampling ratios:
\begin{equation}
\label{eq:ppo_objective}
\begin{aligned}
    \mathcal{L}^{(i)}(\theta)=
    \mathbb{E}
    _{a^{(i)}_t \sim \pi_{\theta_{old}}^{(i)}(\cdot \mid s_{t}, b^{(i)}_t)}[
    \min( w^{(i)}(s_{t},b^{(i)}_t,  a^{(i)}_t)
A_{t}^{(i)},
\text{      clip}\left( w^{(i)}(s_{t},b^{(i)}_t,  a^{(i)}_t) ,
1\pm\epsilon
\right)A_{t}^{(i)} )
]
\end{aligned}
\end{equation}

where $ w^{(i)}(s_{t},b^{(i)}_t,  a^{(i)}_t):=
\pi_{\theta}^{(i)}(a^{(i)}_t | s_{t}, b^{(i)}_t) / \pi_{\theta_{old}}^{(i)}(a^{(i)}_t | s_{t}, b^{(i)}_t)$.
\paragraph{Per-Agent Importance Sampling Ratio.}
The importance sampling ratio $w^{(i)}$ in Eq.~\ref{eq:ppo_objective} is naturally defined per agent and does not require a product over all agents' policies. In contrast, alternating policy update approaches such as HAPPO and HATRPO use a ratio of the form
$w^{(i)}_{\text{BR}}(s, \vec{a}):= \prod_{j=1}^N\pi^{(j)}_{\theta}(a^{(j)} | s) /  \prod_{j=1}^N\pi^{(j)}_{\theta_{old}}(a^{(j)} | s),$
whose variance scales exponentially with the number of agents~\citep{wang2021dop}. MAPPO requires the same ratio in principle but ignores the product, resulting in a biased PPO objective.

\subsection{Belief Approximation for Practical Implementations}
\label{section:practical_implementation}
\begin{figure*}[t!]%[ht]
\begin{center}
 \centerline{\includegraphics[width=\textwidth]{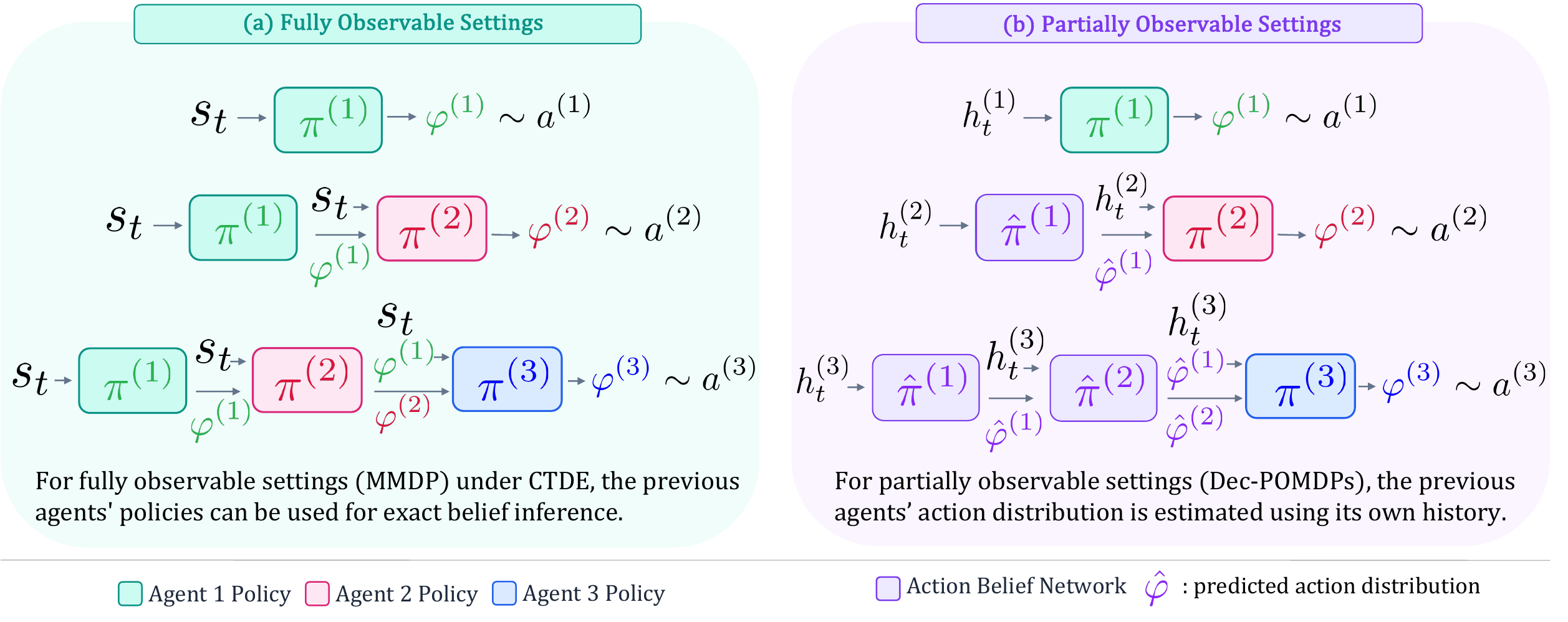}}
% \vspace{-0.5cm}
\caption{Policy architecture for ACPO under (a) fully observable settings (MMDP) and (b) partially observable settings (Dec-POMDP). In the MMDP case, $\vec\varphi^{<i}_t$ is computed exactly by querying preceding agents' networks with the shared state $s_t$. In the Dec-POMDP case, each agent learns an action belief network $\hat\pi^{(j)}$ to approximate $\varphi^{(j)}_t$ from its own history.}
\label{figure:practical_policy_architecture}
\end{center}
\vspace{-0.8cm}
\end{figure*}
The AC-BMDP belief $b^{(i)}_t$ is fully determined by $\vec\varphi^{<i}_t = [\varphi^{(1)}_t, \dots, \varphi^{(i-1)}_t]$, a distribution over preceding actions \emph{within each environment step}. Computing $\vec\varphi^{<i}_t$ therefore reduces to evaluating the preceding agents' policy networks at the current step. The procedure differs depending on whether the environment is fully or partially observable (Figure~\ref{figure:practical_policy_architecture}).

\paragraph{MMDP (Fully Observable)}
All agents share the same state $s_t$, so the preceding agents' action distributions are obtained exactly by querying their policy networks. With shared parameters and agent ID as an additional input, $\varphi^{(j)}_t = \pi_\theta(s_t, \varphi^{(1)}_t, \dots, \varphi^{(j-1)}_t;\, j)$ for each $j < i$. With separate parameters, agent $i$ maintains copies of $\pi^{(j)}_{\theta^{(j)}}$ for $j < i$, accessible during centralized training under CTDE, and queries them directly with $s_t$. In both cases, $\vec\varphi^{<i}_t$ is computed exactly.

\paragraph{Dec-POMDP (Partially Observable).}
In Dec-POMDPs, each agent conditions on its own action-observation history $h^{(i)}_t = \langle \vec o^{(i)}_{\leq t}, \vec a^{(i)}_{< t} \rangle$, and the preceding agents' action distributions cannot be computed exactly. We apply agent modelling~\citep{ALBRECHT2018modelling, papoudakis2021agent}~\citep[Section 9.6]{marl-book}, where each agent maintains an action belief network $\hat\pi^{(j)}$ that predicts $\vec\varphi^{<i}_t$ from its own history,
\[
\hat\varphi^{(j)}_t = \hat\pi^{(j)}(h^{(i)}_t), \quad j < i.
\]
The network is trained by minimizing $D_{KL}(\varphi^{(j)}_t \| \hat\varphi^{(j)}_t)$ against the centralized target $\varphi^{(j)}_t = \pi^{(j)}(h^{(j)}_t)$, available during centralized training. 
Algorithm~\ref{pseudocode:seperate_parameter} in Appendix~\ref{appendix:pseudocodes} provides the full procedure.

\paragraph{Relation to Previous Work in Opponent Modelling}

Our work is related to the concept of \emph{interactive states}~\citep{doshi2005ipomdp}, which augments the state with a model of other agents. Such opponent models often face the problem of \emph{infinite recursive reasoning}, where agent $i$ must model agent $j$'s model of agent $i$, and so on. Public belief states~\citep{brown2020rebel, publicbeliefmdp2013nayyar, foerster2019bayesian} break this recursion by introducing a common belief state shared by all agents. The AC-BMDP avoids it differently. Serialization makes the reasoning dependencies \emph{acyclic}, where agent $i$ only requires the action distributions of preceding agents $j < i$ and never models future agents. As a result, recursive belief hierarchies do not arise.

Other Deep MARL approaches typically treat opponent modelling as an auxiliary algorithmic module. LIAM~\citep{papoudakis2021agent}, for instance, augments policy inputs with a representation trained to predict the private histories of all other agents $\vec h^{-i}$. Specifically, LIAM's decoder is trained against the sampled observations and realized actions of the modelled agents as reconstruction targets. The realized actions are a higher-variance learning signal than the underlying action distribution because actions are stochastically sampled from the behavior policy. In ACPO, the action belief networks are trained directly against the action distributions of \emph{preceding} agents, which is a deterministic quantity required by the chained Bellman operator.

\begin{figure*}[t]%[ht]
\begin{center}
\centerline{\includegraphics[width=0.9\textwidth]{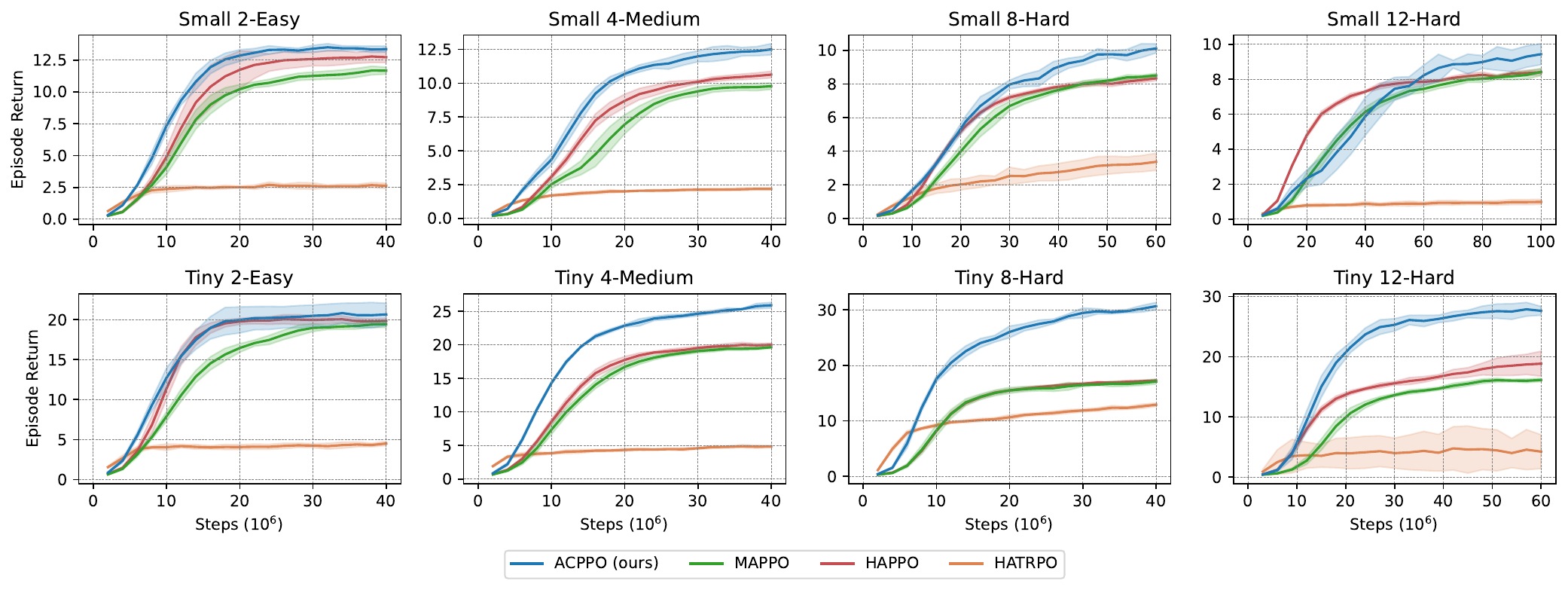}}
% \vspace{-0.3cm}
\caption{Return for Multi-Robot Warehouse~(RWARE) where return is the number of items collected and delivered successfully. {The mean and standard error over 10 seeds are reported for all tasks.} The performance advantage of ACPPO grows significantly as the number of agents increase and the map becomes smaller, where agents need higher levels of coordination being crowded in a tight space.}

\label{fig:return_curve_rware}
\end{center}
\vspace{-0.8cm}
\end{figure*}

\begin{table*}[t!]
\vspace{5pt}
\centering
\scriptsize{
\begin{tabular}{c|c|c|c|c|c}
\toprule
& {VDN} & {QMIX} & {HAPPO} & {MAPPO} & {ACPPO (Ours)}\\
\midrule
protoss\_5\_vs\_5 & $16.20 \pm 0.49$ & $\mathbf{16.53 \pm 0.55}$ & $15.93 \pm 0.54$ & $\mathbf{17.03 \pm 0.92}$ & $\mathbf{16.98 \pm 0.47}$\\
zerg\_5\_vs\_5 & $11.77 \pm 0.41$ & $\mathbf{14.33 \pm 0.63}$ & $11.81 \pm 0.63$ & $11.84 \pm 0.80$ & $12.82 \pm 1.39$\\
protoss\_10\_vs\_11 & $\mathbf{14.74 \pm 0.50}$ & $\mathbf{14.53 \pm 1.06}$ & $13.39 \pm 0.50$ & $14.57 \pm 0.33$ & $\mathbf{15.23 \pm 0.30}$\\
terran\_10\_vs\_11 & $11.59 \pm 0.67$ & $\mathbf{13.50 \pm 0.73}$ & $10.57 \pm 0.58$ & $12.03 \pm 0.50$ & $\mathbf{13.98 \pm 1.43}$\\
zerg\_10\_vs\_11 & $13.38 \pm 0.63$ & $\mathbf{14.61 \pm 0.66}$ & $10.77 \pm 0.35$ & $12.48 \pm 0.52$ & $\mathbf{13.52 \pm 1.18}$\\
\bottomrule
\end{tabular}
}
\caption{Mean return and standard error over 5 seeds on SMACv2. 
The corresponding training learning curves are provided in Figure~\ref{fig:return_curve_smacv2}.}
\label{table:smacv2_results}
\vskip -0.1in
\end{table*}

\begin{figure*}[t]%[ht]
\begin{center}
 \centerline{\includegraphics[width=\textwidth]{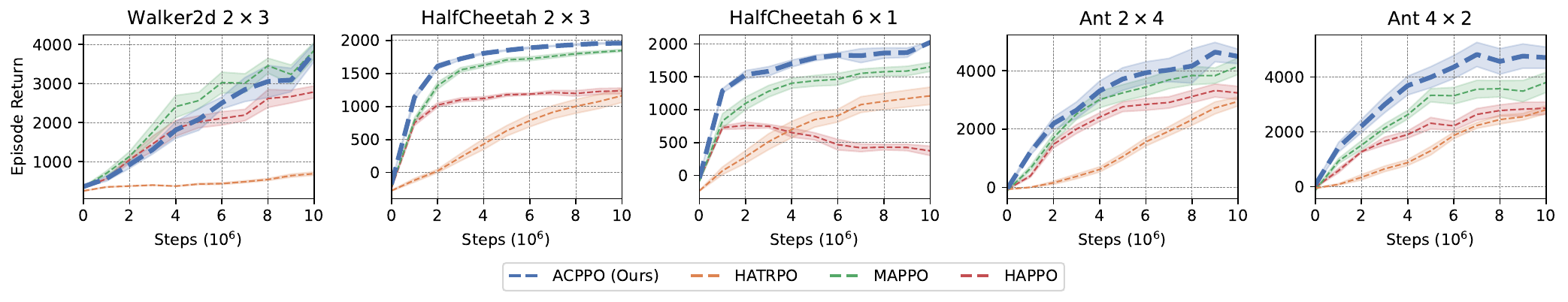}}
% \vspace{-0.5cm}
\caption{Comparison of On-Policy Algorithms on MA-MuJoCo~(Gymnasium). In environment names such as Ant 2$\times$4, the first number denotes the number of agents, while the second number indicates the action dimension per agent.}
\label{fig:return_curve_mamujoco_on_policy}
\end{center}
\vspace{-0.8cm}
\end{figure*}

\section{Experimental Results}

\paragraph{Environments}
We focus our empirical evaluation on Multi-Robot Warehouse~(RWARE)~\citep{papoudakis2021epymarl} which simulates a real-world warehouse environment consisting of multiple robots picking up requested shelves and returning them to a designated location. The main challenge in RWARE is \textit{coordination} where the agents must avoid collisions and maximize the number of shelves successfully delivered. We also evaluate our approach on StarCraft Multi-Agent Challenge v2~(SMACv2)~\citep{ellis2023smacv2} and Multi-Agent MuJoCo~\citep{peng2021facmac}, which are popular benchmarks in cooperative MARL with discrete and continuous action spaces, respectively. For SMACv2 and MA-MuJoCo, we closely follow the experimental setup in \citet{zhong2024haml}.

\paragraph{Baselines}
Our main baselines for ACPPO are MAPPO, representing independent policy optimization, and HAPPO/HATRPO, representing alternating policy optimization\footnote{We do not compare HATRPO on SMACv2 as it is estimated to take up to 40 days for training, which exceeds our computational budget for this work.  Moreover, the results from \citet{zhong2024haml} showed HATRPO is a weaker baseline in SMACv2 in comparison to HAPPO, MAPPO and QMIX. Further details are provided in Appendix~\ref{appendix:computational_resources}.}. MAPPO and HAPPO are the current state-of-the-art on-policy methods in the three domains we consider. Following the baselines considered in \citet{zhong2024haml}, we also compare against value decomposition methods including VDN~\citep{sunehag2018vdn} and QMIX~\citep{rashid2018qmix}, which are off-policy value-based methods for discrete action spaces which shows strong performance in SMACv2.

For a fair comparison, we use the code\footnote{Our code is publicly available at \url{https://github.com/dematsunaga/agent-chained-policy-optimization}.}
for all baselines provided in MARLLib~\citep{hu2023marllib}, with the same PPO backbone. For all baselines, we use the reported hyperparameters from 
\citet{papoudakis2021epymarl} for RWARE, from \citet{ellis2023smacv2} for SMACv2 and from \citet{zhong2024haml} for MA-MuJoCo.  We only tune appropriate values when the code failed to reproduce the reported performance. For MAPPO, we found the reported hyperparameters to be sufficient for reproducing the results.
For ACPPO, the same hyperparameters as MAPPO are used for all experiments in order to isolate the effect of agent-chaining, and do not conduct any additional hyperparameter tuning specific to ACPPO. Furthermore, we set the total number of parameters used by the policy to be similar between ACPPO (policy + action belief network) and the baselines (policy).
Further details on the policy network as well as hyperparameters are provided in Appendix~\ref{appendix:hyperparameter_details}.

\paragraph{Comparative Evaluation}
Our main results in Figure~\ref{fig:return_curve_rware} show that ACPPO outperforms all baselines\footnote{Note that we also compared with QMIX~\citep{rashid2018qmix} on RWARE. However, as QMIX failed to learn any meaningful behavior, we do not report their full results. This is consistent with the results reported in \citet{papoudakis2021epymarl}, and shows the limitations of value decomposition assumptions in complex coordination tasks.} on all tasks in RWARE, despite having the same PPO backbone and the same hyperparameters as MAPPO. We also see that the gap widens substantially as the number of agents increases, where the widest gap is seen in 8-agent and 12-agent domains. This provides evidence that ACPPO performs substantially better when the environment requires higher levels of coordination. Intuitively, the 12-agent maps require the most coordination among agents since it is the scenario with the most agents crowded in a tight space. 
Thus, the performance gap jumps even further for the tiny map.

In the results for MA-MuJoCo (Figures~\ref{fig:return_curve_mamujoco_on_policy}) and SMACv2 (Table~\ref{table:smacv2_results}), ACPPO is on par with or outperforms all baselines on all tasks. Crucially, ACPPO outperforms or matches MAPPO on all tasks with the same hyperparameters, which demonstrates the benefit of agent-chaining. For SMACv2, ACPPO outperforms all on-policy baselines, MAPPO and HAPPO. ACPPO is also the only on-policy algorithm competitive with QMIX. 

Finally, we also provide additional experimental results in Appendix~\ref{appendix:off_policy_results} for an off-policy instantiation of ACPO called ACTD3, which incorporates agent-chaining into TD3~\citep{DBLP:conf/icml/FujimotoHM18}. The algorithm is described in Appendix~\ref{appendix:ac_td3}.
The performance of ACTD3 is compared against strong off-policy baselines for continuous control, including MADDPG~\citep{lowe2017maddpg} a simultaneous policy optimization method and HATD3~\citep{zhong2024haml} an alternating policy optimization method.
The results for MA-MuJoCo (Figure~\ref{fig:return_curve_mamujoco_off_policy}) show that ACTD3 consistently outperforms all baseline methods, and shows that agent-chaining can be generally applied to off-policy algorithms as well.

\paragraph{Ablation Results}
\begin{figure}[t]
  \centering
    \centering
    \includegraphics[width=\textwidth]{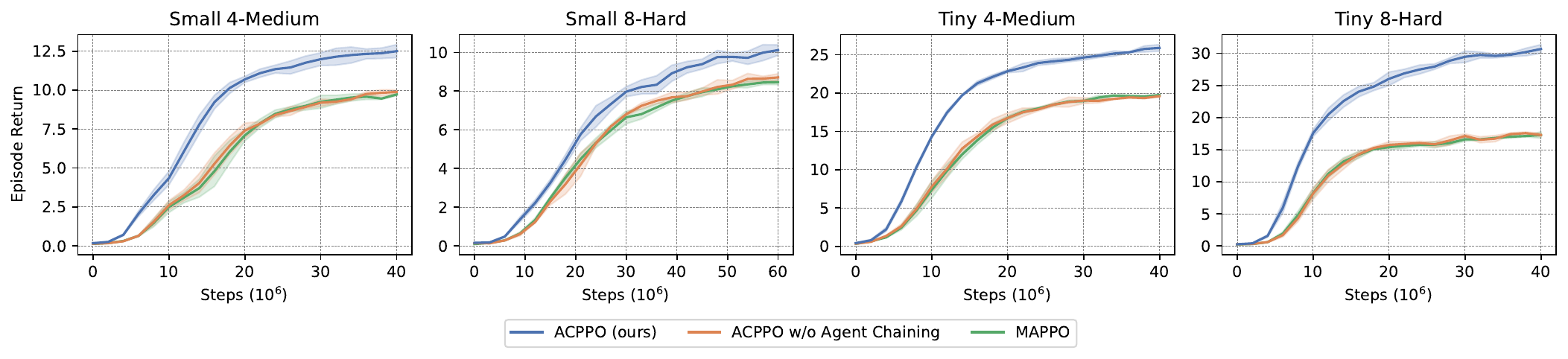}
    \caption{Ablation Results for Agent-Chaining}
    \label{fig:ablation}
\end{figure}
We ablate the core component of ACPPO, which is the advantage computation based on agent chaining. As shown in Figure~\ref{fig:ablation}, the variant ACPPO without agent chaining can be interpreted as MAPPO augmented with belief states as additional policy inputs. The performance of this variant remains close to MAPPO, indicating that the observed gains of ACPPO are not attributable to the extra input, but rather to the agent-chained advantage computation itself.

%%%%%%%%%%%%%%%%%%%%%%%%%%%%%%%%%%%%%%%%%%%%%%%%%%%%%%%%%%%%%%%%
%% Conclusion
%%%%%%%%%%%%%%%%%%%%%%%%%%%%%%%%%%%%%%%%%%%%%%%%%%%%%%%%%%%%%%%%
\section{Conclusion}
We introduced an exact decomposition of the cooperative MMDP policy gradient into a sum of per-agent terms, each computable from a per-agent score function and a decentralized critic, with no structural assumption on the joint value function. The decomposition is enabled by the Agent-Chained Belief MDP, which lifts the simultaneous joint decision into a serialized one where each agent conditions on a belief over the actions of preceding agents. We instantiated this as ACPO with on-policy and off-policy variants, and showed empirically that it outperforms strong baselines on Multi-Robot Warehouse, SMACv2, and MA-MuJoCo, with the gap widening as the number of agents grows.

\paragraph{Limitations and Future Work} 
\label{limitations}
First, the belief computation scales linearly with the number of agents. Each agent $i$ requires $i-1$ forward passes to compute $\vec{\varphi}^{<i}$, and the last agent effectively requires $N-1$ sequential forward passes. Hence the total number of forward passes across all agents grows as $\mathcal{O}(N^2)$. While the wall-clock overhead remains modest in our experiments (Appendix~\ref{appendix:runtime_statistics}), this scaling could become a bottleneck for domains with very large numbers of agents. A promising avenue for future work is to distill the autoregressive belief computation into a single feedforward network, where each agent directly predicts $\vec{\varphi}^{<i}$ from the state (or observation history) without sequential queries, or to incorporate certain symmetries among agents~\citep{pol2022multiagent, Grupen_Selman_Lee_2022}. Second, our work follows the standard setup in CTDE and focus on MMDPs for simplicity.  
% The correct form of the policy gradient for Dec-POMDPs under CTDE remains open: existing theorems are established under centralized control via occupancy states~\citep{Peralez2025simultaneousmove}, while practical CTDE methods rely on surrogates such as state-based centralized critics that are known to be biased under partial observability~\citep{lyu2023oncentralizedcritics}. 
Our formulation can be extended to Dec-POMDPs by combining beliefs over previous agent actions with occupancy states~\citep{Peralez2025simultaneousmove} or public beliefs~\citep{publicbeliefmdp2013nayyar, foerster2019bayesian}, which incorporate uncertainty over private observations. We consider this to be an interesting future direction.

%%%%%%%%%%%%%%%%%%%%%%%%%%%%%%%%%%%%%%%%%%%%%%%%%%%%%%%%%%%%%%%%
%% Appendices
%%%%%%%%%%%%%%%%%%%%%%%%%%%%%%%%%%%%%%%%%%%%%%%%%%%%%%%%%%%%%%%%
% \appendix

% \section{The first appendix}
% \label{sec:appendix1}
% This is an example of an appendix. 

% \noindent \textbf{Note:} Appendices appear before the references and are viewed as part of the ``main text'' and are subject to the 8--12 page limit, are peer reviewed, and can contain content central to the claims of the paper. 

% \section{The second appendix}
% \label{sec:appendix2}
% This is an example of a second appendix. If there is only a single section in the appendix, you may simply call it ``Appendix'' as follows:

% \section*{Appendix}
% % No label, since this can't be referenced meaningfully with \ref{}.
% This format should only be used if there is a single appendix (unlike in this document).

%%%%%%%%%%%%%%%%%%%%%%%%%%%%%%%%%%%%%%%%%%%%%%%%%%%%%%%%%%%%%%%%
%% Acknowledgements
%%%%%%%%%%%%%%%%%%%%%%%%%%%%%%%%%%%%%%%%%%%%%%%%%%%%%%%%%%%%%%%%
\subsubsection*{Acknowledgments}
\label{sec:ack}
This work was supported by the Institute of Information \&
Communications Technology Planning \& Evaluation (IITP)
grant funded by the Korea government (MSIT) (No. RS-2020-II200940, No. RS-2019-II190075, No. RS-2024-00343989, and No. RS-2024-00457882)
as well as the Artificial Intelligence Graduate School Program (Yonsei University) (RS-2020-II201361),
and by a grant from the Institute for AI and Social Innovation at Yonsei University.
Pascal Poupart also acknowledges the Canada CIFAR AI Chair
program. 

The authors would also like to thank Hongchan Jeon for helpful discussions during the earlier stages of this research, Namho Koh for providing feedback on the manuscript, and the anonymous reviewers for providing detailed constructive feedback.

% Use unnumbered third level headings for the acknowledgments. All acknowledgments, including those to funding agencies, go at the end of the paper. Only add this information once your submission is accepted and deanonymized. The acknowledgments do not count towards the 8--12 page limit.

%%%%%%%%%%%%%%%%%%%%%%%%%%%%%%%%%%%%%%%%%%%%%%%%%%%%%%%%%%%%%%%%
%% NOTE: THIS MARKS THE END OF THE "MAIN TEXT"
%%%%%%%%%%%%%%%%%%%%%%%%%%%%%%%%%%%%%%%%%%%%%%%%%%%%%%%%%%%%%%%%

%%%%%%%%%%%%%%%%%%%%%%%%%%%%%%%%%%%%%%%%%%%%%%%%%%%%%%%%%%%%%%%%
%% Bibliography
%%%%%%%%%%%%%%%%%%%%%%%%%%%%%%%%%%%%%%%%%%%%%%%%%%%%%%%%%%%%%%%%
\bibliography{main}
\bibliographystyle{rlj}

%%%%%%%%%%%%%%%%%%%%%%%%%%%%%%%%%%%%%%%%%%%%%%%%%%%%%%%%%%%%%%%%
% AUTHOR: If your paper has no supplementary materials, you may 
%         comment out the line below, which creates the title for
%         the supplementary materials.
%%%%%%%%%%%%%%%%%%%%%%%%%%%%%%%%%%%%%%%%%%%%%%%%%%%%%%%%%%%%%%%%
\appendix
\beginSupplementaryMaterials

\section{Limitations of Previous Methods}
\label{appendix:limitations_independent_policy_updates}

\begin{table}[h!]
\centering
\begin{tabular}{c|cc}
      & A   & B    \\
\hline
A     & 0   & 1    \\
B     & 1   & 2  \\
\end{tabular}
\caption{A simple \(2\times2\) matrix game where value decomposition holds and independent policy optimization methods can converge to the optimum \((B, B)\). }
\label{appendix:value-decomposition-matrix-game}
\end{table}

\paragraph{Decomposable Matrix Games}
In this matrix game, the rewards can be additively decomposed.
The individual utility functions can be decomposed as follows: $r^{(1)}(A)=r^{(2)}(A)=0$ and $r^{(1)}(B)=r^{(2)}(B)=1$.

As described in Section~\ref{subsection:previous_approaches}, independent policy optimization methods such as MAPPO optimize
\[
A^{(i)}(s, a^{(i)}) := \mathbb{E}_{\pi^{(i)}, \vec{\pi}^{-i}} \left[ A\bigl(s, a^{(i)}, \vec{a}^{-i}\bigr) \right]
\]
for each agent separately, treating \(\vec{\pi}^{-i}\) as fixed at the current policy. 

Suppose the joint policy is initialized at the suboptimal outcome \(AA\), which yields reward \(0\). For agent~1, deviating to \(B\) while agent~2 keeps playing \(A\) leads to \(R(BA) = 1 > R(AA) = 0\), so its independent update pushes probability mass toward \(B\). By symmetry, agent~2 also prefers deviating from \(A\) to \(B\), since \(R(AB) = 1 > R(AA) = 0\). When both agents update simultaneously, the joint policy moves toward \(BB\), which attains the global optimum with reward \(2\). In this decomposable setting, each agent’s local improvement direction is aligned with improvement of the joint policy.

\begin{table}[h!]
\centering
\begin{tabular}{c|cc}
      & A   & B    \\
\hline
A     & 0   & 1    \\
B     & 1   & -10  \\
\end{tabular}
\caption{A simple \(2\times2\) matrix game where independent policy optimization can converge to the catastrophic joint action \((B,B)\). }
\label{appendix:independent-matrix-game}
\end{table}

\paragraph{ Matrix Game motivating alternating policy optimization}
Now consider the matrix game in Table~\ref{appendix:independent-matrix-game}, and suppose the joint policy is initialized to  select \(AA\), yielding a reward of \(0\).
For agent~1, deviating to \(B\) while agent~2 plays \(A\) (i.e., the joint action \(BA\)) gives a higher reward than \(AA\), so its independent update drives it toward playing \(B\).
The same holds symmetrically for agent~2, since \(AB\) yields a higher reward than \(AA\).
Because both agents update independently and simultaneously, the joint policy can move toward \(BB\), which receives a reward of \(-10\).
{
Note that further updates will move the joint policy toward AA again.
In other words, independent policy optimization methods will not converge.
}

This failure mode motivates \emph{alternating policy optimization} methods such as HAPPO, where each agent updates its policy while holding the other agents fixed, and guarantee monotonic improvement of the joint policy at each update step.
In the game in Table~\ref{appendix:independent-matrix-game}, if agent~1 updates first, it will switch to playing \(B\) deterministically, since \(BA\) is better than \(AA\).
When agent~2 subsequently updates, it prefers to keep playing \(A\), because \(BA\) is better than \(BB\).
Thus the joint policy converges to \(BA\), avoiding the catastrophic outcome \(BB\).
However, as discussed in detail in Section~\ref{subsection:limitation_of_previous_approaches}, the fixed points of alternating policy optimization methods are Nash Equilibria, which {is not necessarily} the optimal joint policy.
In the game in Table~\ref{appendix:matrix:3x3}, {all the Nash Equilibria happen to be globally optimal}, whereas this is not the case for the matrix game in Table~\ref{matrix_game_policy_values_main_text}.
In contrast, ACPI directly targets the optimal joint policy in the underlying MMDP and therefore does not rely on value decomposition assumptions or on properties of Nash equilibria.

\section{Details on Centralized Training with Decentralized Execution~(CTDE)}
\label{appendix:ctde}

In our work, we consider the standard MARL paradigm of Centralized Training Decentralized Execution~(CTDE), which allows multiple policies to be jointly trained but must be executed in a decentralized fashion. 

Here we introduce related work in three different settings that are often considered in MARL: Centralized Training Centralized Execution~(CTCE), Centralized Teacher with Decentralized Student~(CTDS)~\footnote{CTDS is also referred to as \textit{Decentralizing Centralizing Solutions}~\citep{amato2024introductioncooperativemultiagentreinforcement}.}, and CTDE.

\paragraph{Centralized Training Centralized Execution (CTCE)}
Centralized Training Centralized Execution~(CTCE) methods such as Multi-Agent Transformer~(MAT)~\citep{wen2022mat} use  a joint policy of the form $\pi(a^{(1)}, \dots ,a^{(N)}|s)$ during both training rollouts and execution. 
MAT is a centralized Transformer model defined on the joint action space, and uses a joint observation encoder and joint policies during both training and execution. While a decentralized policy version is also considered, MAT requires a joint observation encoder during both training and execution. 
In MMDPs, the CTCE setting reduces to a Factored-Action MDP~\citep{guestrin2001maplanningwithfmdps, fern2012famdpwithsymbolicdp}, which is a single-agent MDP with factored action spaces. In this case, single-agent techniques such as policy iteration and value iteration can be applied directly. 

Generally, centralized control (CTCE) is not applicable to many real-world multi-agent systems such as power grids~\cite{wang2021powernetworks}, traffic signal control~\cite{chu2020traffic}, and large-scale fleet management~\cite{lin2018fleet} due to the large joint action space and prohibitive communication costs.

\paragraph{Centralized Teacher with Decentralized Student~(CTDS)} \citep{zhao2024ctds,ye2023globaloptimalitycooperativemarl, wang2023macpf} aims to decentralize centralized solutions, by assuming that a single-agent joint policy can be used for training rollout. This joint policy is used during centralized training and distilled to decentralized policies before execution. As a single-agent problem, this assumption makes convergence to optimal policies straightforward as in the CTCE case. We can view this setting as a special case of our work where we assume further access to a joint policy during training rollouts. With this additional assumption, we can solve the serialized problem introduced in Appendix~\ref{appendix:subsection_serialization_proof} without considering beliefs. However, this line of research inherits similar weaknesses of CTCE, and cannot be applied to many real-world multi-agent systems with a massive action space or prohibitive communication costs.

\paragraph{Centralized Training Decentralized Execution~(CTDE)}
In CTDE, the policy must be decentralized (fully factorized) during both training and execution, with the policy form $\vec{\pi} = \langle \pi^{(1)}, \dots \pi^{(N)} \rangle$ where $\pi^{(i)}: S\rightarrow A^{(i)}$. This is the natural MARL paradigm we consider in our work. Algorithms for independent policy optimization methods~\citep{lowe2017maddpg, yu2022mappo}, alternating policy optimization methods~\citep{kuba2022hatrpo, zhong2024haml, liu2024hasac} as well as value decomposition methods~\citep{rashid2018qmix, zhang2021fop} all fall under CTDE.

%%%%%%%%%%%%%%%%%%%%%%%%%%%%%%%%%%%%%%%%%%%%%%%%%%%%%%%%%%%%%%%%%%%
% Exact Calculation 
%%%%%%%%%%%%%%%%%%%%%%%%%%%%%%%%%%%%%%%%%%%%%%%%%%%%%%%%%%%%%%%%%%%
\section{Exact Calculation of Policies in the Matrix Game}
\label{appendix:exact_calculation_matrix}
\begin{figure}[h!]
% \tiny{
\centering
\vspace{-0.33cm}
\begin{game}{3}{3}
& $A$ & $B$  & $C$\\
$A$ & $5$ &  $-20$ & $-20$\\
$B$ & $-20$ &  $10$ & $-20$ \\
$C$ & $-20$ &  $-20$ & $20$\\
\end{game}
\caption{3x3 Matrix Game}
\label{appendix:matrix:3x3}
% }
\end{figure}

Here we provide details on how ACPI (the tabular version of ACPO) can solve the Matrix Game provided in Table~\ref{matrix_game_policy_values_main_text} and repeated in Figure~\ref{appendix:matrix:3x3}.

Due to serialization, ACPO considers this as a 2-step game even though the underlying game is a 1-step game. 
Since we are in a simple toy setting which can be solved by policy iteration, we only consider deterministic action distributions.

{Policy evaluation for agent 1 is conducted }as follows.
$$
\begin{aligned}
    &Q^{(1)}(\varphi^{(1)}) = \gamma'\mathbb{E}_{\substack{b^{(2)}=\varphi^{(1)} \\ \varphi^{(2)} \sim \pi^{(2)}(\cdot \mid b^{(2)})}} \left[Q^{(2)}(b^{(2)}, \varphi^{(2)}) \right]
\end{aligned}
$$
When we only consider deterministic $\varphi$:
$$
\begin{aligned}
    &Q^{(1)}(\delta^{(1)}_A) = \gamma'\mathbb{E}_{\substack{a^{(2)} \sim \pi^{(2)}(\cdot \mid b^{(2)}=A)}} \left[Q^{(2)}(b^{(2)}=A, a^{(2)}) \right] \\
    &Q^{(1)}(\delta^{(1)}_B) = \gamma'\mathbb{E}_{\substack{a^{(2)} \sim \pi^{(2)}(\cdot \mid b^{(2)}=B)}} \left[Q^{(2)}(b^{(2)}=B, a^{(2)}) \right] \\
    &Q^{(1)}(\delta^{(1)}_C) = \gamma'\mathbb{E}_{\substack{a^{(2)} \sim \pi^{(2)}(\cdot \mid b^{(2)}=C)}} \left[Q^{(2)}(b^{(2)}=C, a^{(2)}) \right] \\
\end{aligned}
$$
where we have used $b^{(2)} =A$ to denote the fact that agent 2 knows with probability 1 that it is in state $A$ since it knows that $\pi^{(1)}$ chooses $A$ deterministically. 
Also, we used the notation $\delta^{(1)}_A$ to denote a particular action distribution $\varphi$ which deterministically selects $A$.

For agent 2, policy evaluation is simply the reward function given in the Matrix game:
$$
\begin{aligned}
    &Q^{(2)}(b^{(2)}=A, A)=R(A, A) = 5\\
    &\qquad \qquad \vdots\\
    &Q^{(2)}(b^{(2)}=B, B)=R(B, B) = 10\\
    &\qquad \qquad \vdots\\
    &Q^{(2)}(b^{(2)}=C, C)=R(C, C) = 20\\
\end{aligned}
$$

For policy improvement, 
$$
\begin{aligned}
    &\pi^{(2)}(b^{(2)}=A) \leftarrow \arg \max_{a^{(2)} \in \{A, B, C \}}Q^{(2)}(b^{(2)}=A, a^{(2)}) \\
    &\pi^{(2)}(b^{(2)}=B) \leftarrow \arg \max_{a^{(2)} \in \{A, B, C \}}Q^{(2)}(b^{(2)}=B, a^{(2)}) \\
    &\pi^{(2)}(b^{(2)}=C) \leftarrow \arg \max_{a^{(2)} \in \{A, B, C \}}Q^{(2)}(b^{(2)}=C, a^{(2)})\\
\end{aligned}
$$

Thus, $\pi^{(2)}$ will select $A, B, C$ given agent 1 deterministically selects $A, B, C$, respectively.

For agent 1, 
$$
\begin{aligned}
    &\pi^{(1)} \leftarrow \arg \max_{a^{(1)} \in \{A, B, C \}}\gamma'Q^{(2)}(b^{(2)}=a^{(1)}, \pi^{(2)}(b^{(2)}=a^{(1)})) \\
\end{aligned}
$$

Now, let's say we are in an adversarial starting point where the policy is initialized to deterministically select $(A, A)$. 

Initially, policy improvement for $\pi^{(1)}$ leads agent 1 to continue selecting $A$ since $\pi^{(2)}$ deterministically selects A and $R(A, A)=5$ is better than $R(B, A)=R(C,A)=-20$.
However, after the first iteration, agent 2 will update its policy to select $A, B, C$ for $b^{(2)}=A, b^{(2)}=B, b^{(2)}=C$, respectively.
Thus, agent 1 in iteration 2 will select $C$ since  $C = \arg \max_{a^{(1)}} \gamma' Q^{(2)}(b^{(2)}=a^{(1)}, \pi^{(2)})$, where $\pi^{(2)}$ is now updated.

\section{Serializing the MMDP}
\label{appendix:subsection_serialization_proof}

The serialized multi-agent problem can be described as follows:
\begin{itemize}
    % \item Player function $\rho: \mathbb{N}_{\geq 0} \rightarrow \mathcal{N}$
    \item Action space $\mathcal{A}^{(i)}$ of individual actions for some $i \in \mathcal{N}$
    \item States   $[s_{t}, \vec{a}_{t}^{<i}]$ which augments the original state space with actions selected by previous agents $\vec{a}_{t}^{<i}$. 
    \item Reward function 
    $$R([s_{t}, \vec{a}_{t}^{<i}], a_{t}^{(i)})=
    \begin{cases}
        R(s_t, \vec{a}_{t}) &\text{ if } i=N\\
        0 &\text{ otherwise }\\
    \end{cases} 
    $$
    \item Transition function  $$
    \begin{aligned}
    &T([s_{t+1}, \emptyset]\mid [s_{t}, \vec{a}_{t}^{<i}], a_{t}^{(i)})=
    T(s_{t+1} \mid s_t, \vec{a}_{t})
    \text{ if } i=N\\
    &T([s_{t}, \vec{a}_{t}^{<i+1}]\mid [s_{t}, \vec{a}_{t}^{<i}], a_{t}^{(i)}) \\ &\qquad =
    \mathbb{I}\{ [\vec{a}_{t}^{<i}, a_t^{(i)}]= \vec{a}_{t}^{<i+1}\}\text{ if } i\in \{1, \dots, N -1\} \\
    \end{aligned}
    $$
    \item Discount factor $\gamma'$ where $\gamma'=\gamma^{1/N}$
\end{itemize}

The optimal policy for this serialized problem is in fact optimal for the original MMDP as well.
\begin{restatable}{theorem}{ThmSerializationEquivalence}\citep{Peralez2025simultaneousmove}
\label{theorem:serialization_equivalence}
For every MMDP, there exists a serialized multi-agent problem, of which its optimal policy is also optimal for the underlying MMDP. 
\end{restatable}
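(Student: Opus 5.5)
The plan is to take the serialized problem to be exactly the one constructed above: state $[s_t,\vec a^{<i}_t]$, action $a^{(i)}$, deterministic append transitions at micro-steps $i<N$, the MMDP reward and transition at $i=N$, and discount $\gamma'=\gamma^{1/N}$. The proof first links the values of the two problems and then converts optimality in one into optimality in the other.

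\textbf{Step 1: a policy correspondence.} A serialized policy is a family $\sigma^{(i)}(a^{(i)}\mid s,\vec a^{<i})$. By the chain rule it induces the (possibly correlated) joint Markov policy
\[
\pi(\vec a\mid s)=\prod_{i=1}^N\sigma^{(i)}(a^{(i)}\mid s,\vec a^{<i}).
\]
Conversely, every joint Markov policy of the MMDP factorizes this way, taking $\sigma^{(i)}$ to be its conditionals. In particular, every decentralized product policy $\vec\pi$ is obtained by choosing $\sigma^{(i)}(\cdot\mid s,\vec a^{<i})=\pi^{(i)}(\cdot\mid s)$.

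\textbf{Step 2: value equivalence.} I will compute the return of $\sigma$ from a macro-state $[s,\emptyset]$. Rewards arrive only at micro-steps $kN$, each one after the $N$th agent of environment step $k$ commits, and the discount accumulated by then is $(\gamma')^{kN+N-1}=\gamma^k(\gamma')^{N-1}$. Over each macro-step, the distribution of $(s_{k+1},\vec a_k)$ is exactly the one the MMDP generates under $\pi$. Hence
\[
V^{\sigma}_{\mathrm{ser}}([s,\emptyset])=(\gamma')^{N-1}\,V^{\pi}(s).
\]
I would prove this cleanly by induction on the micro-step Bellman equations: unrolling the $N-1$ deterministic micro-steps gives $V_{\mathrm{ser}}([s,\emptyset])=(\gamma')^{N-1}\,\mathbb E_{\vec a\sim\pi}[R+\gamma V_{\mathrm{ser}}([s',\emptyset])/(\gamma')^{N-1}]$. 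After rescaling by $(\gamma')^{N-1}$, this is the MMDP Bellman equation for $\pi$, and uniqueness of its fixed point closes the argument.

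\textbf{Step 3: optimality transfers.} The factor $(\gamma')^{N-1}$ is a positive constant that does not depend on the policy. By Step 1 the correspondence is onto all joint Markov policies, so a serialized-optimal $\sigma^*$ induces a $\pi^*$ that maximizes $V^\pi(s)$ over all joint policies at every state. By standard MDP theory, this makes $\pi^*$ optimal in the MMDP. It also follows that $\sigma^*$ can be chosen deterministic, so $\pi^*$ is a deterministic joint action map.

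\textbf{Main obstacle.} Under decentralization, $\sigma^{(i)}$ may condition on $\vec a^{<i}$, so the induced $\pi^*$ need not be realizable as independent per-agent policies. The way out is that a deterministic optimal serialized policy resolves this. If every $\sigma^{(j)}$ is deterministic, then $\vec a^{<i}$ is a deterministic function of $s$, and we can set $\pi^{(i)}(s):=\sigma^{(i)}(s,\vec a^{<i}(s))$ recursively. This product policy reproduces $\pi^*$ exactly, so it is an optimal policy for the MMDP. The statement only asserts that the serialized optimum is optimal for the MMDP, and this is what the argument gives. The one point to state carefully is that MMDP optimality is taken over joint policies, and the deterministic optimum lies in the decentralized class.
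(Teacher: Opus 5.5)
Your proposal is correct and follows the paper's route: the same serialized construction and the same value identity $V_{\mathrm{ser}}([s,\emptyset]) = (\gamma')^{N-1}V^{\pi}(s)$. The only difference in deriving that identity is that the paper unrolls the discounted reward sum directly, using $(\gamma')^{tN+N-i}=(\gamma')^{N-i}\gamma^t$, whereas you go through the Bellman fixed point. Your final step is more careful than the paper's bare claim of a ``1-1 mapping between serialized and simultaneous policies'': serialized policies map onto possibly correlated joint policies, and not one-to-one. Your observation that a deterministic serialized optimum unrolls into a product policy $\pi^{(i)}(s)=\sigma^{(i)}(s,\vec a^{<i}(s))$ is what actually justifies treating the optimum as a decentralized MMDP policy.
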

\begin{proof}
Let $\vec{\pi} = \langle {\pi}^{(1)}, \dots, {\pi}^{(N)}\rangle$ be any policy over the serialized MMDP. 
    \begin{equation}
        \begin{aligned}
            &V^{(i)}([s, \vec{a}^{<i}]; \vec\pi) \\
            &= \mathbb{E}\left[\sum_{j=i}^N (\gamma')^{j-i} R\left(s_{0}, \vec{a}^{<j}_0, a^{(j)}_0 \right) \mid s_0 = s, \vec{a}_0^{<i} = \vec{a}^{<i}, \vec{\pi}\right] 
            +\mathbb{E}\left[\sum_{t=1} \sum_{j=1}^N (\gamma')^{tN+j-i} R\left(s_{t}, \vec{a}^{<j}_t, a^{(j)}_t \right) \mid  \vec{\pi}\right] \\
            &= \mathbb{E}\left[ (\gamma')^{N-i} R\left(s_{0}, \vec{a}^{<N}_0, a^{(N)}_0 \right) \mid s_0 = s, \vec{a}_0^{<i} = \vec{a}^{<i}, \vec{\pi}\right] 
            +\mathbb{E}\left[\sum_{t=1}  (\gamma')^{tN+N-i} R\left(s_{t}, \vec{a}^{<N}_t, a^{(N)}_t \right) \mid  \vec{\pi}\right] \\
            &= \mathbb{E}\left[ (\gamma')^{N-i}R\left(s_{0}, \vec{a}^{<N}_0, a^{(N)}_0 \right) \mid s_0 = s, \vec{a}_0^{<i} = \vec{a}^{<i}, \vec{\pi}\right] +\mathbb{E}\left[\sum_{t=1}  (\gamma')^{N-i} \gamma^t R\left(s_{t}, \vec{a}^{<N}_t, a^{(N)}_t \right) \mid  \vec{\pi}\right] \\
            &= \mathbb{E}\left[\sum_{t=0}  (\gamma')^{N-i} \gamma^t R\left(s_{t}, \vec{a}^{<N}_t, a^{(N)}_t \right) \mid  s_0 = s, \vec{a}_0^{<i} = \vec{a}^{<i},\vec{\pi}\right] \\
            &= (\gamma')^{N-i}\mathbb{E}\left[\sum_{t=0}   \gamma^t R\left(s_{t}, \vec{a}^{<N}_t, a^{(N)}_t \right) \mid s_0 = s, \vec{a}_0^{<i} = \vec{a}^{<i},  \vec{\pi}\right] \\
        \end{aligned}
    \end{equation}

$V^{(1)}(s) =(\gamma')^{N-1} \mathbb{E}\left[\sum_{t=0}   \gamma^t R\left(s_{t}, \vec{a}^{<N}_t, a^{(N)}_t \right) \mid s_0=s,  \vec{\pi}\right]$ for agent $1$.

$V^{(N)}([s, \vec{a}^{<N}]) = \mathbb{E}\left[\sum_{t=0}   \gamma^t R\left(s_{t}, \vec{a}^{<N}_t, a^{(N)}_t \right) \mid s_0 = s, \vec{a}_0^{<N} = \vec{a}^{<N}, \vec{\pi}\right]$ for agent $N$.

Thus, we have established that any policy $\vec{\pi}$ in the serialized problem will obtain the same expected value in the MMDP (times a constant factor). There is a 1-1 mapping between serialized and simultaneous policies which yield the same value.

\end{proof}

\section{Belief Update}
\label{appendix:belief_update_rule}
$$
\begin{aligned}
\tau&\left([s_{t},b_{t}^{(i)}],\varphi_t^{(i)}\right)(\vec{a}_{t}^{<i+1})\\&=\frac{1}{\eta([s_{t},b_{t}^{(i)}],\varphi_t^{(i)})}\sum_{\vec{a}_{t}^{<i}}b_{t}^{(i)}( \vec{a}_{t}^{<i})
T\left([s_{t},\vec{a}_{t}^{<i+1}]|[s_t,\vec{a}_{t}^{<i}],\varphi_t^{(i)}\right) \\
&=\frac{1}{\eta([s_{t},b_{t}^{(i)}],\varphi_t^{(i)})}\sum_{\vec{a}_{t}^{<i}}b_{t}^{(i)}( \vec{a}_{t}^{<i})
\sum_{a_t^{(i)}} \varphi_t^{(i)}(a_t^{(i)}) 
T\left([s_{t},\vec{a}_{t}^{<i+1}]|[s_t,\vec{a}_{t}^{<i}],a_{t}^{(i)}\right) \\
\end{aligned}
$$
Finally,  $\eta$ is the normalization factor defined as
$$
\begin{aligned}
&\eta([s_{t},b_{t}^{(i)}],\varphi_t^{(i)})=\sum_{\vec{a}_{t}^{<i+1}}\sum_{\vec{a}_{t}^{<i}}b_{t}^{(i)}( \vec{a}_{t}^{<i})
\sum_{a_t^{(i)}} \varphi_t^{(i)}(a_t^{(i)}) 
T\left([s_{t},\vec{a}_{t}^{<i+1}]|[s_t,\vec{a}_{t}^{<i}],a_{t}^{(i)}\right)    
\end{aligned}
$$

\section{Proofs for Policy Iteration Convergence}
\label{appendix:policy_iteration_proofs}
%==========================================================================================
% Policy Evaluation
%==========================================================================================
Here we present \textit{Agent-Chained Policy Iteration}~(ACPI), a policy iteration procedure defined on the AC-BMDP. ACPI is the tabular version of ACPO introduced in Section~\ref{section:acpo}. 
Unlike alternating policy optimization approaches~\citep{zhong2024haml} where the fixed point of policy iteration is a NE, we prove that ACPI converges to the optimal joint policy of the MMDP. 

\subsection{Agent-Chained Policy Evaluation}
\label{appendix:policy_evaluation_proof}
By repeatedly applying $\mathcal{T}^{\vec{\pi}}$, we can obtain the $Q$-values for a given joint policy $\vec{\pi}$:
\begin{restatable}{lemma}{LemmaPolicyEvaluation}
\label{lemma:policy_evaluation}(Agent-Chained Policy Evaluation)
The Agent-Chained Bellman Operators in Definition~\ref{def:bellman_operators} are a contraction mapping under the infinity norm. Thus, starting with any $\vec{Q}=\langle Q^{(1)}, \dots Q^{(N)}\rangle $ and a joint policy $\vec{\pi}=\langle \pi^{(1)}, \dots,  \pi^{(N)}\rangle$, the repeated application of $\mathcal{T}^{\vec{\pi}}$   will return a set of Q-values for each agent $\langle Q^{(1, \vec{\pi})}, \dots Q^{(N, \vec{\pi})}\rangle $ in the limit.
\end{restatable}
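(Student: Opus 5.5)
The plan is to treat the tuple $\vec Q=\langle Q^{(1)},\dots,Q^{(N)}\rangle$ as a single function on the disjoint union of the micro-step state-action spaces $\bigsqcup_i (\mathcal S\times\mathcal B^{(i)})\times\Phi^{(i)}$. On this space we use the sup norm $\|\vec Q\|_\infty=\max_i\sup_{[s,b^{(i)}],\varphi^{(i)}}|Q^{(i)}([s,b^{(i)}],\varphi^{(i)})|$, which is the space of bounded functions and is complete. The operator $\mathcal T^{\vec\pi}$ of Definition~\ref{def:bellman_operators} acts componentwise, producing the $i$-th component from $Q^{(i+1)}$ for $i<N$ and the $N$-th component from $Q^{(1)}$. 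With this reading, the AC-BMDP of Definition~\ref{def:ac-bmdp} is an ordinary single-agent MDP whose state is $[s,b^{(i)}]$ together with the index $i$, and whose discount is $\gamma'$. The result then reduces to the standard fact that the policy-evaluation Bellman operator is a $\gamma'$-contraction.

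The key step is to check the contraction directly, component by component. Fix two tuples $\vec Q,\vec Q'$.

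For $i<N$, the belief transition is deterministic, so
\[
\bigl|(\mathcal T^{\vec\pi}Q^{(i+1)})-(\mathcal T^{\vec\pi}Q'^{(i+1)})\bigr|([s,b^{(i)}],\varphi^{(i)})\le \gamma'\,\mathbb E_{\varphi^{(i+1)}\sim\pi^{(i+1)}}\bigl|Q^{(i+1)}-Q'^{(i+1)}\bigr|\le\gamma'\|\vec Q-\vec Q'\|_\infty .
\]

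For the wrap-around component, the reward term $R([s,b^{(N)}],\varphi^{(N)})$ is common to both tuples and cancels. What remains is the same bound with the expectation taken over $s'\sim T(\cdot\mid[s,b^{(N)}],\varphi^{(N)})$ and $\varphi^{(1)}\sim\pi^{(1)}(\cdot\mid s')$. That $T$ is a genuine probability kernel follows from its definition as a mixture of the MMDP kernel under $b^{(N)}\otimes\varphi^{(N)}$.

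Taking the maximum over $i$ and the supremum over arguments gives $\|\mathcal T^{\vec\pi}\vec Q-\mathcal T^{\vec\pi}\vec Q'\|_\infty\le\gamma'\|\vec Q-\vec Q'\|_\infty$. Since $\gamma'=\gamma^{1/N}<1$ whenever $\gamma<1$, this is a strict contraction.

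Completeness then lets Banach's fixed-point theorem finish the argument. There is a unique fixed point $\langle Q^{(1,\vec\pi)},\dots,Q^{(N,\vec\pi)}\rangle$, and iterating $\mathcal T^{\vec\pi}$ from any bounded initialization converges to it geometrically at rate $\gamma'$. It also follows that the fixed point coincides with the AC-BMDP action-value functions of $\vec\pi$: unrolling the recursion gives the discounted sum of micro-step rewards, consistent with the computation in the proof of Theorem~\ref{theorem:serialization_equivalence}.

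I expect the main obstacle to be measure-theoretic rather than analytic. The action space $\Phi^{(i)}=\Delta(\mathcal A^{(i)})$ is continuous, and the belief spaces $\mathcal B^{(i)}$ are continuous as well. So I would need $R$ bounded, which holds if the MMDP reward is bounded, since $R([s,b^{(N)}],\varphi^{(N)})$ is a convex combination. I would also need $\pi^{(i)}$ to be measurable kernels, so that the expectations are well defined and map bounded measurable functions to bounded measurable functions. I would state these as standing assumptions: finite $\mathcal S$ and $\mathcal A^{(i)}$ in the tabular setting, bounded rewards, and $\gamma<1$. Under these assumptions the rest is the routine argument above.
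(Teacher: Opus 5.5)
Your proposal is correct and follows essentially the same route as the paper: both treat $\vec Q$ as a single Q-function on the state space augmented by the agent index, bound the chained components $i<N$ and the wrap-around component $i=N$ (where the reward cancels) by $\gamma'\|\vec Q-\vec Q'\|_\infty$, and conclude with Banach's fixed-point theorem. The standing assumptions you make explicit (bounded rewards, measurable policy kernels over the continuous $\Phi^{(i)}$ and $\mathcal B^{(i)}$, and $\gamma<1$) are left implicit in the paper, so stating them tightens the argument.
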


\begin{proof}
    First, note that we can view $\langle Q^{(1)}, \dots, Q^{(N)}\rangle $ as a 
    single Q-function with the state space further augmented by agent ID. Under this perspective, we now have a single policy denoted as $\pi$ and a corresponding value function $Q^{\pi}$, defined on the AC-BMDP.

    Since the AC-BMDP is a single-agent Belief MDP, the rest follows standard convergence results of policy evaluation~\citep{Agarwal2019ReinforcementLT}, which we include for completeness.

    For any agent $i \in \{1, \dots N-1 \}$, state $[s, b^{(i)}, i]$, action $\varphi^{(i)}$ and arbitrary Q-values $Q_1, Q_2$,
    $$
    \begin{aligned}
        &\left|\mathcal{T}^{\pi}Q_1([s, b^{(i)}, i], \varphi^{(i)} ) - \mathcal{T}^{\pi}Q_2([s, b^{(i)}, i], \varphi^{(i)} )\right| \\
        & = \left|\mathbb{E}_{\substack{[s, b^{(i+1)}, i+1] = T([s, b^{(i)}, i], \varphi^{(i)})\\\varphi^{(i+1) } \sim \pi(\cdot \mid s, b^{(i+1)}, i+1)}}
        \left[\gamma'Q_1([s, b^{(i+1)}, i+1], \varphi^{(i+1)} ) - \gamma'Q_2([s, b^{(i+1)}, i+1], \varphi^{(i+1)} )\right] \right| \\
        & \leq \gamma'\max_{\varphi^{(i+1)} }\left|Q_1([s, b^{(i+1)}, i+1], \varphi^{(i+1)} ) - Q_2([s, b^{(i+1)}, i+1], \varphi^{(i+1)} )\right| \\
        & \leq \gamma'\max_{\varphi^{(i+1)}, b^{(i+1)}, j \in \{1, \dots, N\} }\left|Q_1([s, b^{(i+1)}, j], \varphi^{(i+1)} ) - Q_2([s, b^{(i+1)}, j], \varphi^{(i+1)} )\right| \\
    \end{aligned}
    $$

    For agent $N$,
    $$
    \begin{aligned}
        &\left|\mathcal{T}^{\pi}Q_1([s, b^{(N)}, N], \varphi^{(N)} ) - \mathcal{T}^{\pi}Q_2([s, b^{(N)}, N], \varphi^{(N)} )\right| \\
        & = \left|\mathbb{E}_{\substack{[s', 1] \sim T(\cdot \mid [s, b^{(N)}, N], \varphi^{(N)})\\\varphi^{(1) } \sim \pi(\cdot \mid s', 1)}}
        \left[\gamma'Q_1([s', 1], \varphi^{(1)} ) - \gamma'Q_2([s', 1], \varphi^{(1)} )\right] \right| \\
        & \leq \gamma'\max_{s', \varphi^{(1)} }\left|Q_1([s', 1], \varphi^{(1)} ) - Q_2([s', 1], \varphi^{(1)} )\right| \\
         & \leq \gamma'\max_{s', \varphi^{(1)}, j \in \{1, \dots, N\} }\left|Q_1([s', j], \varphi^{(1)} ) - Q_2([s', j], \varphi^{(1)} )\right| \\
    \end{aligned}
    $$

    Thus, $\mathcal{T}^\pi$ is a contraction mapping under the infinity norm, i.e. there exists $\gamma' \in [0, 1)$ such that
    
    $$\| \mathcal{T}^{\pi}Q_{1} - \mathcal{T}^{\pi}Q_2\|_\infty 
    \leq \gamma' \| Q_{1} - Q_2\|_\infty
    $$
    
    Since $\mathcal{T}^{\pi}$ is a contraction mapping, we have the following:
    $$ 
    \begin{aligned}
        \| Q_k - Q^\pi\|_\infty &= \| \mathcal{T}^{\pi}Q_{k-1} - \mathcal{T}^{\pi}Q^\pi\|_\infty \\
        & \leq \gamma' \| Q_{k-1} - Q^\pi\|_\infty\\
        & \ \ \vdots\\
        & \leq ({\gamma'})^k \| Q_{0} - Q^\pi\|_\infty\\
    \end{aligned}
    $$
    If we let $k \rightarrow \infty ,  \| Q_k - Q^\pi\|_\infty =0$, and $\lim_{k\rightarrow \infty}Q_k = Q^\pi$. By the Banach fixed-point theorem, this solution is unique.
    
\end{proof}

%==========================================================================================
% Policy Improvement
%==========================================================================================

\subsection{Agent-Chained Policy Improvement}
\label{appendix:policy_improvement_proof}
During policy improvement, each agent's policy $\pi^{(i)}$ is updated to select the greedy action distribution with respect to its own $Q^{(i)}$:
\begin{equation}
    \label{eq:policy_improvement}
    \begin{aligned}
        \pi^{(i)}_{new}([s, b^{(i)}]) \leftarrow \arg \max_{\varphi^{(i)}} Q^{(i,\vec{\pi})}([s, b^{(i)}], \varphi^{(i)}) \qquad \forall i, b^{(i)}, s .
    \end{aligned}
\end{equation}

\begin{restatable}{lemma}{LemmaPolicyImprovement}
\label{lemma:policy_improvement} (Agent-Chained Policy Improvement)
Given a policy $\vec{\pi}=\langle \pi^{(1)}, \dots, \pi^{(N)}\rangle$, let $Q^{(i,\vec{\pi})}$ denote the $i$-th agent's value function for a joint policy $\vec{\pi}$. If we update the new policy $\vec{\pi}_{new}=\langle \pi^{(1)}_{new}, \dots, \pi^{(N))}_{new}\rangle$ by Eq.~\ref{eq:policy_improvement},
then
\[
Q^{(i, \vec{\pi}_{new})} ([s, b^{(i)}], \varphi^{(i)})\geq Q^{(i, \vec{\pi})} ([s, b^{(i)}], \varphi^{(i)})
\]
\end{restatable}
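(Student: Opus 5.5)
Following the proof of Lemma~\ref{lemma:policy_evaluation}, I would first collapse $\langle Q^{(1)},\dots,Q^{(N)}\rangle$ into a single action-value function $Q^{\pi}$ on the AC-BMDP, whose state space is augmented with the agent index, so that $[s,b^{(i)},i]$ is a state and $\varphi^{(i)}$ is an action. Under this view the greedy update in Eq.~\ref{eq:policy_improvement} is exactly the single-agent greedy update $\pi_{new}(x)\in\arg\max_{\varphi}Q^{\pi}(x,\varphi)$ for every augmented state $x$. The statement then reduces to the classical policy improvement theorem in a discounted MDP with discount $\gamma'<1$. The action space $\Phi^{(i)}=\Delta(\mathcal A^{(i)})$ is a compact simplex. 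I would check that $Q^{(i,\vec\pi)}$ is continuous in $\varphi^{(i)}$: by the chained operator in Definition~\ref{def:bellman_operators}, it enters only through the deterministic belief update or through the bilinear reward and transition marginalization. The argmax is therefore attained, and $\pi_{new}$ can be taken deterministic in $\varphi$.

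The core step is the standard telescoping argument. Write $V^{\pi}(x)=\mathbb{E}_{\varphi\sim\pi(\cdot\mid x)}[Q^{\pi}(x,\varphi)]$. Greediness gives
\begin{equation*}
V^{\pi}(x)\le \max_{\varphi}Q^{\pi}(x,\varphi)=Q^{\pi}(x,\pi_{new}(x)).
\end{equation*}
Substituting this into the Bellman identity gives $Q^{\pi}=\mathcal T^{\pi}Q^{\pi}\le \mathcal T^{\pi_{new}}Q^{\pi}$ pointwise. For $i<N$ the right-hand side is $\gamma' V^{\pi}$ at the next micro-state $[s,b^{(i+1)},i+1]$. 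For $i=N$ it is $R$ plus $\gamma'\,\mathbb{E}_{s'}V^{\pi}([s',1])$. In both cases I bound $V^{\pi}$ by $Q^{\pi}(\cdot,\pi_{new}(\cdot))$.

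I would then use monotonicity of $\mathcal T^{\pi_{new}}$: if $Q_1\le Q_2$ pointwise, then $\mathcal T^{\pi_{new}}Q_1\le\mathcal T^{\pi_{new}}Q_2$, since the operator is an expectation with nonnegative weights plus a fixed reward. Iterating yields
\begin{equation*}
Q^{\pi}\le \mathcal T^{\pi_{new}}Q^{\pi}\le(\mathcal T^{\pi_{new}})^2Q^{\pi}\le\cdots .
\end{equation*}
By Lemma~\ref{lemma:policy_evaluation}, $(\mathcal T^{\pi_{new}})^kQ^{\pi}\to Q^{\pi_{new}}$ in $\|\cdot\|_\infty$. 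Passing to the limit gives $Q^{(i,\vec\pi_{new})}\ge Q^{(i,\vec\pi)}$ for every $i$, $s$, $b^{(i)}$ and $\varphi^{(i)}$.

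The main obstacle is a subtlety of the belief coordinate. When agent $j<i$ changes its policy, the beliefs $b^{(i)}$ reached on-trajectory change. The inequality must therefore hold at every belief state, not only at the currently visited ones. This is why the improvement in Eq.~\ref{eq:policy_improvement} is required for all $b^{(i)}$, and it is what makes the single-MDP argument go through: $Q^{(i,\vec\pi)}([s,b^{(i)}],\cdot)$ is defined for arbitrary $b^{(i)}$ as the value of following $\vec\pi$ from that micro-state. I would make explicit that $\mathcal T^{\pi_{new}}$ evaluates the next agent's $Q$ at the belief produced by $\tau$ from the given $\varphi^{(i)}$, independently of the old policies. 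Monotonicity is then a genuine pointwise statement over the whole augmented state space.

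I would also note, in order to apply Lemma~\ref{lemma:policy_evaluation}, that the $Q$-values are bounded. This holds because $R$ is bounded and $\gamma'<1$.
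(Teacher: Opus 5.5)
Your proposal is correct and is essentially the paper's proof. You collapse the critics into one $Q^{\pi}$ on the agent-indexed AC-BMDP, use greediness to get $Q^{\pi}=\mathcal{T}^{\pi}Q^{\pi}\le\mathcal{T}^{\pi_{new}}Q^{\pi}$, and unroll; your monotonicity-plus-contraction limit makes rigorous the ``$\vdots$'' step in the paper's chain of inequalities.

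Drop the continuity aside, because it is false in general. For $i<N$, $\varphi^{(i)}$ enters $Q^{(i,\vec\pi)}$ through the downstream value at the updated belief. That value can jump whenever some $\pi^{(j)}$ with $j>i$ switches discontinuously in its belief input, so the argmax need not even be attained. This does not affect your argument, since the lemma presupposes the greedy update of Eq.~\ref{eq:policy_improvement}.
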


\begin{proof}
    As in the proof for Lemma~\ref{lemma:policy_evaluation}, we consider $\vec{\pi}$ to be a single policy $\pi$ which is augmented by agent ID in the state space.
    
    For any $i \in \{1, \dots N-1 \}, s, b^{(i)}, \varphi^{(i)}$,
    $$
    \begin{aligned}
        Q^{{\pi}}([s, b^{(i)}, i], \varphi^{(i)}) &= \gamma' \mathbb{E}_{\substack{[s, b^{(i+1)}, i+1]= T([s, b^{(i)}, i], \varphi^{(i)}) \\ \varphi^{(i+1)} \sim \pi(\cdot \mid [s, b^{(i+1)}, i+1])}}\left[Q^{{\pi}}\left([s, b^{(i+1)}, i+1], \varphi^{(i+1)}\right) \right] \\
        & \leq \gamma' \mathbb{E}_{\substack{[s, b^{(i+1)}, i+1]= T([s, b^{(i)}, i], \varphi^{(i)})}}\left[\max_{\varphi^{(i+1)}}Q^{{\pi}}\left([s, b^{(i+1)}, i+1], \varphi^{(i+1)}\right) \right] \\
        & = \gamma' \mathbb{E}_{\substack{[s, b^{(i+1)}, i+1]= T([s, b^{(i)}, i], \varphi^{(i)}) \\ \varphi^{(i+1)} \sim \pi_{new}(\cdot \mid [s, b^{(i+1)}, i+1])}}\left[Q^{{\pi}}\left([s, b^{(i+1)}, i+1], \varphi^{(i+1)}\right) \right]
    \end{aligned}
    $$

    For $i=N$ and any $s, b^{(i)}, \varphi^{(i)}$,
    $$
    \begin{aligned}
        Q^{{\pi}}&([s, b^{(N)}, N], \varphi^{(N)})\\ &= R([s, b^{(N)}, N], \varphi^{(N)})+\gamma' \mathbb{E}_{\substack{[s', 1]= T([s, b^{(N)}, N], \varphi^{(N)}) \\ \varphi^{(1)} \sim \pi(\cdot \mid [s', 1])}}\left[Q^{{\pi}}\left([s',1], \varphi^{(1)}\right) \right] \\
        & \leq R([s, b^{(N)}, N], \varphi^{(N)})+ \gamma' \mathbb{E}_{\substack{[s', 1]= T([s, b^{(N)}, N], \varphi^{(N)}))}}\left[\max_{\varphi^{(1)}}Q^{{\pi}}\left([s',1], \varphi^{(1)}\right) \right] \\
        & = R([s, b^{(N)}, N], \varphi^{(N)})+ \gamma' \mathbb{E}_{\substack{[s', 1]= T([s, b^{(N)}, N], \varphi^{(N)}) \\ \varphi^{(1)} \sim \pi_{new}(\cdot \mid [s', 1])}}\left[Q^{{\pi}}\left([s',1], \varphi^{(1)}\right) \right] \\
    \end{aligned}
    $$
Thus, for any $i \in \{1, \dots N \}, s, b^{(i)}, \varphi^{(i)}$,
    $$
    \begin{aligned}
        Q^{{\pi}}([s, b^{(i)}, i], \varphi^{(i)}) 
        &\leq R([s, b^{(i)}, i], \varphi^{(i)})\\&\qquad+\gamma' \mathbb{E}_{\substack{[s, b^{(i+1)}, i+1]= T([s, b^{(i)}, i], \varphi^{(i)}) \\ \varphi^{(i+1)} \sim \pi_{new}(\cdot \mid [s, b^{(i+1)}, i+1])}}\left[Q^{{\pi}}\left([s, b^{(i+1)}, i+1], \varphi^{(i+1)}\right) \right] \\
        &\ \ \vdots \\
        &\leq Q^{{\pi_{new}}}([s, b^{(i)}, i], \varphi^{(i)}) 
    \end{aligned}
    $$
\end{proof}
Crucially, since each agent's improvement in Eq.~\ref{eq:policy_improvement} only depends on its own $Q^{(i, \vec{\pi})}$, all agents can update their policies independently. This is unlike MA-PI~\citep{zhong2024haml}, which requires alternating updates where $\pi^{(i)}$ can only be improved after $\vec{\pi}^{<i}$ has been updated.
%==========================================================================================
% Convexity of Q proof
%==========================================================================================
\subsection{Characterization of $Q^{(i, *)}$}
\label{appendix:convexity_of_Q_proof}

Here we provide a useful property of the optimal Q-values of the AC-BMDP. 
\begin{restatable}{theorem}{CorollaryMaxQPhiEqualsMaxQA}
    \label{theorem:max_q_phi_equals_max_q_a}
    For all $i\in \{1, \dots, N\}, s, b^{(i)}$,
    $$
    \max_{\varphi^{(i)}}Q^{(i, *)}\left([s, b^{(i)}], \varphi^{(i)}\right)=\max_{a^{(i)}}Q^{(i, *)}\left([s, b^{(i)}], \delta_{a^{(i)}}\right)
    $$
\end{restatable}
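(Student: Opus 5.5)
The plan is to show that, for each fixed $[s,b^{(i)}]$, the map $\varphi^{(i)} \mapsto Q^{(i,*)}([s,b^{(i)}],\varphi^{(i)})$ is \emph{affine} (linear in the mixture weights) on the simplex $\Delta(\mathcal A^{(i)})$, after which the statement is immediate: a linear function on a simplex attains its maximum at a vertex, and the vertices are exactly the point masses $\delta_{a^{(i)}}$. The inequality ``$\geq$'' is trivial, since every $\delta_{a^{(i)}}$ is itself a feasible $\varphi^{(i)}$. For ``$\leq$'' we would write
\[
Q^{(i,*)}([s,b^{(i)}],\varphi^{(i)})=\sum_{a^{(i)}}\varphi^{(i)}(a^{(i)})\,Q^{(i,*)}([s,b^{(i)}],\delta_{a^{(i)}})\leq \max_{a^{(i)}}Q^{(i,*)}([s,b^{(i)}],\delta_{a^{(i)}}).
\]

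To establish the linearity, we would pass back to the serialized MMDP of Appendix~\ref{appendix:subsection_serialization_proof}, where the state is $[s,\vec a^{<i}]$ and the actions are realized. In the AC-BMDP, choosing $\varphi^{(i)}$ at $[s,b^{(i)}]$ has the following effect: the true (hidden) preceding actions are distributed as $b^{(i)}$, the realized $a^{(i)}\sim\varphi^{(i)}$ is drawn independently, and all subsequent agents see only the product belief $b^{(i+1)}=b^{(i)}\otimes\varphi^{(i)}$. The optimal continuation value is $\gamma'\max_{\varphi^{(i+1)}}Q^{(i+1,*)}([s,b^{(i)}\otimes\varphi^{(i)}],\varphi^{(i+1)})$, and this is the quantity we must control. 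The key step is an induction backwards along the chain, from $i=N$ down to $i=1$, within one environment step, using the Bellman optimality form of Definition~\ref{def:bellman_operators}. The base case $i=N$ is clean: $R([s,b^{(N)}],\varphi^{(N)})$ and the transition to $s'$ are explicitly bilinear sums over $b^{(N)}$ and $\varphi^{(N)}$, and $V^{(1,*)}(s')$ does not depend on the belief. Hence $Q^{(N,*)}$ is linear in $\varphi^{(N)}$, and indeed multilinear in $(\varphi^{(1)},\dots,\varphi^{(N)})$ through $b^{(N)}$.

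The main obstacle is the inductive step for $i<N$. There the continuation value $\max_{\varphi^{(i+1)}}Q^{(i+1,*)}$ of a function that is multilinear in the belief is a pointwise maximum of linear functions of $\varphi^{(i)}$. It is therefore \emph{convex} in $\varphi^{(i)}$, not linear, because later agents can exploit the information that $\varphi^{(i)}$ is deterministic. So I would not try to prove linearity for $i<N$. Instead I would prove convexity: by induction, $Q^{(i,*)}([s,b^{(i)}],\cdot)$ is convex in $\varphi^{(i)}$, since a maximum over $\varphi^{(i+1)}$ of functions that are convex (in fact multilinear) in $\varphi^{(i)}$ is convex. This matches the section title ``Convexity of $Q$''. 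A convex function on a compact simplex attains its maximum at an extreme point, namely some $\delta_{a^{(i)}}$, which yields ``$\leq$'' directly. I would therefore organize the proof in three steps. First, show that $Q^{(i,*)}$ is jointly convex in $\varphi^{(i)}$ for fixed $b^{(i)}$, by backward induction within one step; the base case uses linearity at $i=N$, and the inductive step uses the fact that a max of convex functions is convex, together with the fact that the composition $\varphi^{(i)}\mapsto b^{(i)}\otimes\varphi^{(i)}$ is linear in $\varphi^{(i)}$. Second, invoke the fact that the maximum of a convex function over $\Delta(\mathcal A^{(i)})$ is attained at a vertex. Third, conclude equality using the trivial direction.
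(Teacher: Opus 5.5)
Your final argument (convexity in $\varphi^{(i)}$, then Jensen on the simplex) is correct in substance, but it takes a different route from the paper's. The paper never uses convexity. It proves by backward induction a weaker property: with all other arguments fixed, the maximum of $Q^{(k,*)}$ over any single simplex argument $\varphi^{(j)}$ (its own action or a belief component) equals the maximum over point masses. It then carries this through the Bellman step by interchanging $\max_{\varphi^{(i)}}$ with $\max_{\varphi^{(i+1)}}$. Your route gives more, namely the pointwise bound $Q^{(i,*)}([s,b^{(i)}],\varphi^{(i)})\le\sum_{a}\varphi^{(i)}(a)\,Q^{(i,*)}([s,b^{(i)}],\delta_{a})$. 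This is the right substitute for the affine identity, which, as you note, fails for $i<N$. Your base case also keeps the continuation term $\gamma'\sum_{s'}T(s'\mid[s,b^{(N)}],\varphi^{(N)})\,V^{(1,*)}(s')$, which is multilinear because $V^{(1,*)}$ does not depend on the current belief; the paper writes its base case only for terminal steps. The paper's route is marginally more elementary, since it needs only that vertex-attainment survives an interchange of maxima.

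One step needs repair. Your inductive step uses convexity of $\varphi^{(i)}\mapsto Q^{(i+1,*)}([s,b^{(i)}\otimes\varphi^{(i)}],\varphi^{(i+1)})$ for fixed $\varphi^{(i+1)}$, i.e.\ convexity of $Q^{(i+1,*)}$ in a \emph{belief} component. Your induction hypothesis, however, records only convexity of $Q^{(i+1,*)}$ in its own action argument, and the parenthetical ``in fact multilinear'' holds only when $i+1=N$. There are two ways to fix this.

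\begin{itemize}
\item Strengthen the hypothesis to: $Q^{(k,*)}([s,\vec\varphi^{<k}],\varphi^{(k)})$ is convex in each $\varphi^{(j)}$, $j\le k$, separately with the other arguments fixed. This holds at $k=N$ by multilinearity, and it is preserved by $Q^{(k-1,*)}=\gamma'\max_{\varphi^{(k)}}Q^{(k,*)}$ because a pointwise maximum of convex functions is convex. This is exactly why the paper's proof carries a second statement for belief components $j\neq i$.
\item More simply, unroll the deterministic within-step chain: $Q^{(i,*)}([s,b^{(i)}],\varphi^{(i)})=(\gamma')^{N-i}\max_{\varphi^{(i+1)},\dots,\varphi^{(N)}}Q^{(N,*)}([s,\vec\varphi^{<N}],\varphi^{(N)})$, with $\vec\varphi^{<N}$ collecting $b^{(i)}$ and $\varphi^{(i)},\dots,\varphi^{(N-1)}$. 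This shows $Q^{(i,*)}$ directly as a supremum of functions linear in $\varphi^{(i)}$, so no induction is needed.
\end{itemize}

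In either version, keep convexity componentwise, not joint. A multilinear function like $Q^{(N,*)}$ is in general not jointly convex in $(\varphi^{(1)},\dots,\varphi^{(N)})$. Also, product beliefs do not form a convex set, so ``convex in $b^{(i+1)}$'' only makes sense along slices such as $\varphi^{(i)}\mapsto b^{(i)}\otimes\varphi^{(i)}$.
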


\begin{proof}
    
    We prove the claim by induction.

    For any $s, {b}^{(N)}, \varphi^{(N)}$ at terminal timesteps,
    $$
    \begin{aligned}
        Q^{(N)}([s, b^{(N)}], \delta_{a^{(N)}}) 
        &= R([s, b^{(N)}], a^{(N)}) \\
    \end{aligned}
    $$
    
    $$
    \begin{aligned}
        Q^{(N)}([s, b^{(N)}], \varphi^{(N)}) 
        &= R([s, b^{(N)}], \varphi^{(N)})\\
        &=  \sum_{a^{(N)}}\varphi^{(N)}(a^{(N)})R([s, b^{(N)}], a^{(N)})\\
        % &=\sum_{\vec{a}^{<N}} b^{(N)}(\vec{a}^{<N}) \sum_{a^{(N)}}\varphi^{(N)}(a^{(N)})R(s, \vec{a}^{<N}, a^{(N)})\\
        &= \sum_{a^{(N)}}\varphi^{(N)}(a^{(N)})Q^{(N)}([s, b^{(N)}], \delta_{a^{(N)}})\\
    \end{aligned}
    $$

    Also for any $\varphi^{(j)}$,
    $$
    \begin{aligned}
        Q^{(N)}([s, \varphi^{(j)}, \vec \varphi^{-j}], \varphi^{(N)}) 
        &= R([s, \varphi^{(j)}, \vec \varphi^{-j}], \varphi^{(N)})\\
        &=  \sum_{{a}^{(j)}}\varphi^{(j)}({a}^{(j)})R([s, a^{(j)}, \vec \varphi^{-j}], \varphi^{(N)})\\
        &=  \sum_{{a}^{(j)}}\varphi^{(j)}({a}^{(j)})Q^{(N)}([s, a^{(j)}, \vec \varphi^{-j}], \varphi^{(N)})\\
    \end{aligned}
    $$
    Since $Q^{(N)}$ is affine in the action distribution, the statement holds for the base case.
    
    For the inductive step, 
    $$
    \begin{aligned}
        \max_{\varphi^{(i)}} Q^{(i,*)} ([s, b^{(i)}], \varphi^{(i)}) &= \gamma' \max_{\varphi^{(i)}}\max_{\varphi^{(i+1)}} Q^{(i+1, *)} ([s, b^{(i)}, \varphi^{(i)}], \varphi^{(i+1)}) \\ 
        &= \gamma' \max_{\varphi^{(i+1)}}\max_{\varphi^{(i)}} Q^{(i+1, *)} ([s, b^{(i)}, \varphi^{(i)}], \varphi^{(i+1)}) \\
        &= \gamma' \max_{\varphi^{(i+1)}}\max_{a^{(i)}} Q^{(i+1, *)} ([s, b^{(i)}, a^{(i)}], \varphi^{(i+1)}) \\
        &= \max_{a^{(i)}} Q^{(i+1, *)}([s, b^{(i)}], a^{(i)}),\\
    \end{aligned}
    $$

    For any $j\neq i$,
    $$
    \begin{aligned}
        \max_{\varphi^{(j)}} Q^{(i,*)} ([s, \varphi^{(j)}, \vec \varphi^{-j}], \varphi^{(i)}) &= \gamma' \max_{\varphi^{(j)}}\max_{\varphi^{(i+1)}} Q^{(i+1, *)} ([s, \varphi^{(j)}, \vec \varphi^{-j}, \varphi^{(i)}], \varphi^{(i+1)}) \\ 
        &= \gamma' \max_{\varphi^{(i+1)}} \max_{\varphi^{(j)}} Q^{(i+1, *)} ([s, \varphi^{(j)}, \vec \varphi^{-j}, \varphi^{(i)}], \varphi^{(i+1)}) \\ 
        &= \gamma' \max_{\varphi^{(i+1)}} \max_{a^{(j)}} Q^{(i+1, *)} ([s, a^{(j)}, \vec \varphi^{-j}, \varphi^{(i)}], \varphi^{(i+1)}) \\ 
        &= \max_{a^{(j)}} Q^{(i,*)} ([s, a^{(j)}, \vec \varphi^{-j}], \varphi^{(i)})\\
    \end{aligned}
    $$ 
\end{proof}
%==========================================================================================
% Policy Iteration Convergence
%==========================================================================================

When restricted to deterministic action distributions $\delta_{a^{(i)}}$ , the beliefs become deterministic and the AC-BMDP reduces to a serialized version of the MMDP. Theorem~\ref{theorem:max_q_phi_equals_max_q_a} shows that since the optimal Q-values coincide between the AC-BMDP and the MMDP, the optimal policies coincide as well.

% \CorollaryMaxQPhiEqualsMaxQA*
\subsection{Agent-Chained Policy Iteration}
\label{appendix:policy_iteration_proof}
Alternating between Agent-Chained Policy Evaluation and Policy Improvement provably converges to the optimal policy of the MMDP. 

\begin{restatable}{theorem}{ThmPolicyIterationConvergence}
\label{theorem:pi_convergence} (Agent-Chained Policy Iteration)
Starting from any policy $\vec{\pi} \in \Pi$, the sequence of value functions $\vec{Q}^{\vec{\pi}_n} $ and the improved policies $\vec{\pi}_{n+1}$ converges to the optimal value functions and the policy of the AC-BMDP, i.e, $$Q^{(i, *)}([s, b^{(i)}], \varphi^{(i)})=\lim_{n \rightarrow \infty} Q^{(i,\vec{\pi}_n)}([s, b^{(i)}], \varphi^{(i)}) \geq Q^{(i,\vec{\pi})}([s, b^{(i)}], \varphi^{(i)})$$ for any $\vec{\pi}, i, s, b^{(i)}, \varphi^{(i)}$.
Furthermore the optimal policy of the AC-BMDP is also optimal in the underlying MMDP.
\end{restatable}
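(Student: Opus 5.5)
The plan is to run the classical policy-iteration convergence argument on the AC-BMDP. Throughout, I view $\vec\pi$ as a single policy on the state space augmented with the agent index, as in the proofs of Lemma~\ref{lemma:policy_evaluation} and Lemma~\ref{lemma:policy_improvement}.

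\textbf{Step 1: monotonicity and a limit.} By Lemma~\ref{lemma:policy_improvement}, the sequence $Q^{(i,\vec\pi_n)}$ is pointwise nondecreasing in $n$. Rewards are bounded and $\gamma'<1$, so it is also bounded above by $R_{\max}/(1-\gamma')$. Hence it converges pointwise to some limit $Q^{(i,\infty)}$.

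\textbf{Step 2: the limit is the optimal $Q$.} I would show that $Q^{(i,\infty)}$ is a fixed point of the Bellman optimality operator $\mathcal{T}^*$. This operator is the chained operator of Definition~\ref{def:bellman_operators} with the expectation over $\varphi^{(i+1)}\sim\pi^{(i+1)}$ replaced by $\max_{\varphi^{(i+1)}}$.
\begin{itemize}
\item Sandwich. From the improvement step and the evaluation identity we get
\[
Q^{\vec\pi_{n+1}} \ge \mathcal{T}^{\vec\pi_{n+1}}Q^{\vec\pi_n} = \mathcal{T}^*Q^{\vec\pi_n} \ge \mathcal{T}^{\vec\pi_n}Q^{\vec\pi_n} = Q^{\vec\pi_n}.
\]
\item Passing to the limit. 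Since $\mathcal{T}^*$ is a $\gamma'$-contraction in $\|\cdot\|_\infty$ (same computation as Lemma~\ref{lemma:policy_evaluation}), it is continuous, and letting $n\to\infty$ gives $Q^{\infty}=\mathcal{T}^*Q^\infty$.
\end{itemize}
By uniqueness of the fixed point (Banach), $Q^\infty=Q^{*}$. The inequality $Q^*\ge Q^{(i,\vec\pi)}$ then follows because $Q^*$ dominates every policy's $Q$-function: iterating $Q^{\vec\pi}=\mathcal{T}^{\vec\pi}Q^{\vec\pi}\le\mathcal{T}^*Q^{\vec\pi}$ gives $Q^{\vec\pi}\le \lim_k (\mathcal{T}^*)^kQ^{\vec\pi}=Q^*$.

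\textbf{Main obstacle: uniform convergence over a continuous space.} The belief space and the action space $\Delta(\mathcal A^{(i)})$ are continuous. Pointwise monotone convergence therefore does not immediately give $\|\cdot\|_\infty$ convergence, and the argmax in Eq.~\ref{eq:policy_improvement} must exist. I would handle this as follows.
\begin{itemize}
\item The argmax exists because $Q$ is continuous on the compact simplex. This follows by induction along the chain, since $Q^{(N)}$ is affine in $\varphi^{(N)}$ and in each belief factor.
\item To avoid needing uniform convergence of the monotone sequence, I would use the standard error bound instead: $\|Q^*-Q^{\vec\pi_{n+1}}\|_\infty\le\gamma'\|Q^*-Q^{\vec\pi_n}\|_\infty$. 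This comes from $Q^* - Q^{\vec\pi_{n+1}} \le Q^*-\mathcal{T}^*Q^{\vec\pi_n}=\mathcal{T}^*Q^*-\mathcal{T}^*Q^{\vec\pi_n}$ together with nonnegativity. It gives geometric convergence directly, bypassing Step 1's limit argument.
\end{itemize}

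\textbf{Step 3: optimality in the MMDP.} By Theorem~\ref{theorem:max_q_phi_equals_max_q_a}, at every $(s,b^{(i)})$ the maximum of $Q^{(i,*)}$ over $\varphi^{(i)}$ is attained at a deterministic $\delta_{a^{(i)}}$. So an optimal AC-BMDP policy can be chosen to output point masses. Under such a policy the beliefs are deterministic and the AC-BMDP coincides with the serialized MMDP of Appendix~\ref{appendix:subsection_serialization_proof}. By Theorem~\ref{theorem:serialization_equivalence}, this policy's value equals $(\gamma')^{N-1}$ times its MMDP value, and it is optimal there.

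Conversely, any MMDP joint policy $\vec\pi$ embeds into the AC-BMDP by letting each agent ignore $b^{(i)}$ and output $\pi^{(i)}(s)$. Its AC-BMDP value is $(\gamma')^{N-1}J(\vec\pi)$, since the reward and transition are marginalized over the independent product. Hence $Q^*$ upper-bounds every MMDP policy value, and the optimal AC-BMDP policy is MMDP-optimal.

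A subtlety remains: the belief-conditioned deterministic policy must be realizable as a decentralized MMDP policy. This holds because $b^{(i)}$ is a deterministic function of $s$ via $\vec\varphi^{<i}$, so each composed policy depends only on $s$.
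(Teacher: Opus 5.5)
Your skeleton is the paper's: monotone improvement from Lemma~\ref{lemma:policy_improvement}, Bellman optimality of the limit, then Theorem~\ref{theorem:max_q_phi_equals_max_q_a} plus serialization for the MMDP claim. You are tighter than the paper in two places.

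\emph{Convergence.} The paper only checks that if an iteration yields no improvement, then $Q^{\vec\pi_n}$ satisfies the optimality equation. That says nothing about whether the monotone sequence actually reaches $Q^*$. Your sandwich $Q^{\vec\pi_{n+1}}\ge\mathcal{T}^*Q^{\vec\pi_n}$, combined with $Q^*\ge Q^{\vec\pi_{n+1}}$, gives $\|Q^*-Q^{\vec\pi_{n+1}}\|_\infty\le\gamma'\|Q^*-Q^{\vec\pi_n}\|_\infty$. This is the argument actually needed, and it also settles the pointwise-versus-uniform issue.

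\emph{MMDP optimality.} The paper's MMDP claim rests only on the remark that deterministic action distributions reduce the AC-BMDP to the serialized MMDP. Your converse embedding, which treats a decentralized MMDP policy as a belief-ignoring AC-BMDP policy with value $(\gamma')^{N-1}J(\vec\pi)$, is what rules out an MMDP policy beating the AC-BMDP optimum.

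One auxiliary claim is false: that the argmax in Eq.~\eqref{eq:policy_improvement} exists because $Q^{(i,\vec\pi)}$ is continuous by induction along the chain. $Q^{(N,\vec\pi)}$ is indeed multilinear for every $\vec\pi$, since its continuation depends only on the finitely many $s'$. For $i<N$, however, the dependence on $\varphi^{(i)}$ runs through $\pi^{(i+1)}(\cdot\mid s,[b^{(i)},\varphi^{(i)}])$, and nothing makes the next agent's policy continuous in its belief input.

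Here is a counterexample, allowed because the statement starts from an arbitrary policy. In the $3\times3$ game, let $\pi^{(2)}_0$ play $\delta_C$ unless $b^{(2)}=\delta_C$, in which case it plays $\delta_A$. Then $Q^{(1,\vec\pi_0)}(\varphi^{(1)})=\gamma'(40\varphi^{(1)}(C)-20)$ for $\varphi^{(1)}\neq\delta_C$, but $Q^{(1,\vec\pi_0)}(\delta_C)=-20\gamma'$. The supremum $20\gamma'$ is not attained, so the first greedy step is undefined.

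Continuity also generically fails along greedy iterates. The reason is that $\pi^{(i+1)}_{n+1}$ switches on indifference surfaces of $Q^{(i+1,\vec\pi_n)}$, not of $Q^{(i+1,\vec\pi_{n+1})}$. Your induction is valid only for $Q^*$, where the next agent's choice enters through a maximum and the maximum theorem preserves continuity.

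Falling back on the vertex-only improvement of Eq.~\eqref{eq:policy_improvement_delta} does not by itself restore $\mathcal{T}^{\vec\pi_{n+1}}Q^{\vec\pi_n}=\mathcal{T}^*Q^{\vec\pi_n}$. Theorem~\ref{theorem:max_q_phi_equals_max_q_a} concerns $Q^{(i,*)}$ only, not $Q^{(i,\vec\pi_n)}$.

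Your main convergence argument is correct conditional on the greedy step being well defined, which the paper assumes silently. You should either state that assumption explicitly or use $\epsilon_n$-greedy improvements with $\epsilon_n\to0$. In the latter case your contraction bound picks up an additive $\epsilon_n/(1-\gamma')$ term and still yields $Q^{\vec\pi_n}\to Q^*$.
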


\begin{proof}

By the monotonic improvement property in Lemma~\ref{lemma:policy_improvement}, we know that for any $i \in \{1, \dots, N \}, b^{(i)},s, \varphi^{(i)} $,  
$$
Q^{\vec{\pi}_{n+1}}([s, b^{(i)}], \varphi^{(i)}) \geq Q^{\vec{\pi}_n}([s, b^{(i)}], \varphi^{(i)}) 
$$.

If there is no improvement,
$$
\begin{aligned}
    Q^{\vec{\pi}_{n}}([s, b^{(i)}], \varphi^{(i)}) &= Q^{\vec{\pi}_{n+1}}([s, b^{(i)}], \varphi^{(i)}) \\
    &=  \gamma' Q^{\vec{\pi}_{n+1}}([s, b^{(i+1)}], {\pi}^{(i+1)}_{n+1}([s, b^{(i+1)}]))\\
    &=  \gamma' Q^{\vec{\pi}_{n}}([s, b^{(i+1)}], {\pi}^{(i+1)}_{n+1}([s, b^{(i+1)}]))\\
    &=  \gamma' \max_{\varphi^{(i+1)}}Q^{\vec{\pi}_{n}}([s, b^{(i+1)}], \varphi^{(i+1)}))\\
\end{aligned}
$$
where $[s, b^{(i+1)}] = T([s, b^{(i)}], \varphi^{(i)})$. Thus, at the limit $\lim_{n \rightarrow \infty} Q^{\vec{\pi}_{n}}([s, b^{(i)}], \varphi^{(i)})$, the Bellman optimality equations are satisfied.

Due to Theorem~\ref{theorem:max_q_phi_equals_max_q_a}, it is sufficient to consider the following policy improvement procedure considering only the $\delta_{a^{(i)}}$, which is the space of deterministic $\varphi^{(i)}$:

\begin{equation}
    \label{eq:policy_improvement_delta}
    \begin{aligned}
        \forall i, b^{(i)}, s, \pi^{(i)}_{new}([s, b^{(i)}]) \leftarrow \arg \max_{a^{(i)}} Q^{\vec{\pi}}([s, b^{(i)}], \delta_{a^{(i)}}) 
    \end{aligned}
\end{equation}

% Note that if we restrict ourselves to an AC-BMDP defined over the space of deterministic 1-step policies $\delta_{a^{(i)}}$, all of the components in the AC-BMDP are equivalent to that of the serialized version of the MMDP. 
\end{proof}

While the policy improvement in Eq.~\eqref{eq:policy_improvement} is defined over the space of action distributions $\varphi^{(i)} \in \Delta(\mathcal{A}^{(i)})$, there is no loss of generality in restricting to deterministic action distributions $\delta_{a^{(i)}}$~(Theorem~\ref{theorem:max_q_phi_equals_max_q_a}). When $\varphi^{(i)}$ is deterministic for all agents, the belief $b^{(i)}$ is also deterministic, and the AC-BMDP reduces to a serialized version of the MMDP. The full pseudocode is provided in Algorithm~\ref{pseudocode:agent_chained_policy_iteration} in Appendix~\ref{appendix:pseudocodes}.

\section{Proofs for Section~\ref{section:acpo}}
\label{appendix:pg-decomposition-proofs}

% For advantages defined on the action space $a^{(i)}$,

% \[
% \begin{aligned}
%     A^{(i)}(s, b^{(i)}, a^{(i)})&=Q^{(i)}(s, b^{(i)}, a^{(i)})-V^{(i)}(s,b^{(i)}) \\
%     &=  Q^{(i)}(s, b^{(i)}, a^{(i)})
%     - \mathbb{E}_{\tilde a^{(i)} \sim \pi^{(i)}(\cdot \mid s, b^{(i)})}\left[Q^{(i)}(s,b^{(i)}, \tilde a^{(i)})\right]
% \end{aligned}
% \] 

We prove our central result: the MMDP policy gradient admits an exact
decomposition into a sum of $N$ per-agent terms, each involving only a
per-agent score function and a per-agent decentralized critic
$Q^{(i)}$ from the AC-BMDP. The decomposition is exact and requires no
structural assumption on the joint Q-function.

\subsection{Proof for the Multi-Agent Policy Gradient Decomposition Theorem}
\label{appendix:decomposition-proof}

\ThmMapgDecomposition*

\begin{proof}
The proof has two steps: split the AC-BMDP gradient by agent index, then convert from the AC-BMDP to MMDP gradient via a value-equivalence constant.

\paragraph{Step 1: Per-agent split of the AC-BMDP gradient.}
By Eq.~\ref{eq:lifted_pg}, the AC-BMDP policy gradient is
\[
\nabla_\theta J^{\text{AC}}(\vec\pi_\theta)
= \mathbb{E}_{[s,b^{(i)}] \sim d^{\text{AC}}_{\gamma'},\, \varphi^{(i)} \sim \pi^{(i)}_\theta(\cdot|s,b^{(i)})}
\!\left[\nabla_\theta \log \pi^{(i)}_\theta(\varphi^{(i)}|s,b^{(i)}) \cdot Q^{(i)}([s,b^{(i)}], \varphi^{(i)})\right].
\]
The AC-BMDP visitation $d^{\text{AC}}_{\gamma'}$ is defined over the augmented state $[s, b, i]$, where the agent index $i$ cycles deterministically through $1, \dots, N$ across micro-steps. Only micro-steps $\tau = tN + (i-1)$ have agent index $i$, so the visitation factors as
\begin{equation}
d^{\text{AC}}_{\gamma'}([s, b^{(i)}, i]) = (\gamma')^{i-1} \sum_{t \geq 0} \gamma^t \Pr\!\left(s_t = s,\, b^{(i)}_t = b^{(i)}\mid \vec\pi_\theta\right).
\label{eq:per-agent-visitation}
\end{equation}
Splitting the AC-BMDP expectation along the deterministic agent-index dimension gives
\begin{equation}
\nabla_\theta J^{\text{AC}}(\vec\pi_\theta)
= \sum_{i=1}^N \mathbb{E}_{[s,b^{(i)}] \sim d^{\text{AC}}_{\gamma'}(\cdot, \cdot, i),\, \varphi^{(i)} \sim \pi^{(i)}_\theta}
\!\left[\nabla_\theta \log \pi^{(i)}_\theta(\varphi^{(i)}|s,b^{(i)}) \cdot Q^{(i)}([s,b^{(i)}], \varphi^{(i)})\right].
\label{eq:per-agent-acbmdp}
\end{equation}

\paragraph{Step 2: Value equivalence.}

Let $\vec \mu_\theta$ be defined as the policy defined on the MMDP, induced from the policy from AC-BMDP $\vec \pi_\theta$: 
\[
\vec \mu_\theta (\vec a \mid s):= \mathbb{E}_{\varphi^{(1)} \sim \pi^{(1)}(\cdot \mid s), \varphi^{(2)} \sim \pi^{(2)}(\cdot \mid s, \varphi^{(1)}), \dots } \left[\prod_{i=1}^N  \varphi^{(i)}(a^{(i)})\right].
\]
The AC-BMDP provides non-zero reward only at agent $N$'s micro-step, where it equals the MMDP reward $r_t$. Starting from the initial state $[s_0, \emptyset, 1]$,
\[
J^{\text{AC}}(\vec\pi_\theta)
= \mathbb{E}\!\left[\sum_{\tau \geq 0} (\gamma')^\tau R^{\text{AC}}_\tau\right]
= \mathbb{E}\!\left[\sum_{t \geq 0} (\gamma')^{tN + (N-1)} r_t\right]
= (\gamma')^{N-1} J(\vec\mu_\theta),
\]
since $(\gamma')^N = \gamma$. The constant $(\gamma')^{N-1}$ is independent of $\theta$, so $\nabla_\theta J^{\text{AC}} = (\gamma')^{N-1} \nabla_\theta J$. Substituting into \eqref{eq:per-agent-acbmdp} and dividing by $(\gamma')^{N-1}$ gives \eqref{eq:pg-decomposition}.
\end{proof}

\paragraph{Remark on the leading constant.}
The factor $1/(\gamma')^{N-1}$ in \eqref{eq:pg-decomposition} is a positive scalar independent of $\theta$ and is absorbed into the effective step size of any first-order or trust-region optimizer. It does not affect the gradient direction or the fixed point of the resulting optimization.

\section{Agent-Chained Proximal Policy Optimization~(ACPPO)}
\subsection{Derivation for the Policy Gradient Objective defined in Eq.~\ref{eq:approx_pg_objective}}

\label{appendix:final_objective_justification}

% \PropositionAgentChainedPolicyGradient*
$$
\begin{aligned}
    \nabla_{\theta} J^{AC}&(\theta; s, b^{(i)}) 
    \\ &= \mathbb{E}_{\varphi^{(i)} \sim \pi^{(i)}_{\theta}(\cdot|s, b^{(i)})} \left[ \nabla_{\theta} \log \pi^{(i)}_{\theta}(\varphi^{(i)}|s, b^{(i)}) Q^{(i)}([s, b^{(i)}], \varphi^{(i)}) \right] \\
    &= \int_{\varphi^{(i)}} \pi^{(i)}_{\theta}(\varphi^{(i)}|s, b^{(i)}) \nabla_\theta \log \pi^{(i)}_\theta(\varphi^{(i)}|s, b^{(i)}) Q^{(i)}([s, b^{(i)}], \varphi^{(i)}) d \varphi^{(i)} \\
    &{=}\int_{\varphi^{(i)}} \pi^{(i)}_\theta(\varphi^{(i)}|s, b^{(i)}) \nabla_\theta \log \pi^{(i)}_\theta(\varphi^{(i)}|s, b^{(i)}) \int_{a^{(i)}} \varphi^{(i)}(a^{(i)}) Q^{(i)}([s, b^{(i)}], a^{(i)}) d a^{(i)} d \varphi^{(i)} \\
    &= \int_{a^{(i)}} \int_{\varphi^{(i)}} \pi^{(i)}_\theta(\varphi^{(i)}|s, b^{(i)}) \nabla_\theta \log \pi^{(i)}_\theta(\varphi^{(i)}|s, b^{(i)}) \varphi^{(i)}(a^{(i)}) d \varphi^{(i)} ~ Q^{(i)}([s, b^{(i)}], a^{(i)}) d a^{(i)}  \\
    &= \int_{a^{(i)}} \int_{\varphi^{(i)}} \nabla_{\theta} \pi^{(i)}_\theta(\varphi^{(i)} | s, b^{(i)}) \varphi^{(i)}(a) d \varphi^{(i)} ~ Q^{(i)}([s, b^{(i)}], a^{(i)}) d a^{(i)}  \\
    &= \int_{a^{(i)}} \int_{\varphi^{(i)}} \nabla_{\theta} \pi^{(i)}_\theta(\varphi^{(i)} | s, b^{(i)}) \Pr(a^{(i)} | \varphi^{(i)}) d \varphi^{(i)} ~ Q^{(i)}([s, b^{(i)}], a^{(i)}) d a^{(i)}  \\
    &= \int_{a^{(i)}} \nabla_{\theta} \left( \underbrace{\int_{\varphi^{(i)}} \pi^{(i)}_\theta(\varphi^{(i)} | s, b^{(i)}) \Pr(a^{(i)} \mid \varphi^{(i)}) d \varphi^{(i)}}_{= \pi^{(i)}_\theta(a^{(i)}|s, b^{(i)})} \right) ~ Q^{(i)}([s, b^{(i)}], a^{(i)}) d a^{(i)} \\
    &= \int_{a^{(i)}} \nabla_{\theta} \pi^{(i)}_\theta(a^{(i)}|s, b^{(i)})  Q^{(i)}([s, b^{(i)}], a^{(i)}) d a^{(i)} \\
    &= \int_{a^{(i)}} \pi^{(i)}_\theta(a^{(i)}|s, b^{(i)})  \nabla_{\theta}  \log \pi^{(i)}_\theta(a^{(i)}|s, b^{(i)})  Q^{(i)}([s, b^{(i)}], a^{(i)}) d a^{(i)} \\
    &= \mathbb{E}_{a^{(i)} \sim \pi^{(i)}_\theta(a^{(i)}|s, b^{(i)})} \left[ \nabla_{\theta}  \log \pi^{(i)}_\theta(a^{(i)}|s, b^{(i)})  Q^{(i)}([s, b^{(i)}], a^{(i)}) \right]. \\
\end{aligned}
$$

The policy structure is now in the familiar form $\pi^{(i)}(a^{(i)} \mid s, b^{(i)})$ which stochastically outputs actions $a^{(i)}$. This standard policy structure allows for well-defined importance sampling ratios in the PPO objective.

\subsubsection{Note on the 3rd step}
\begin{equation}
Q^{(i)}\big([s,\bel{i}],\vphi^{(i)}\big)
\;\overset{?}{=}\;
\E_{a^{(i)} \sim \vphi^{(i)}}\Big[ Q^{(i)}\big([s,\bel{i}], a^{(i)}\big) \Big],
\label{eq:affine-claim}
\end{equation}
i.e., the claim that $Q^{(i)}$ is \emph{affine} in the action distribution 

While \eqref{eq:affine-claim} does not hold in general, it is easy to see that it holds with equality for $i=N$.

For agents $i < N$, we show instead that Eq.~\ref{eq:approx_pg_objective} is a surrogate objective that is equal under detached belief states.

Recall the lifted policy gradient on the AC-BMDP,
\begin{equation}
\nabla_\theta J^{\mathrm{AC}}(\vec{\pi}_\theta)
= \E_{[s,\bel{i}] \sim \dgp,\; \vphi^{(i)} \sim \pi^{(i)}_\theta(\cdot \mid s, \bel{i})}
\Big[ \nabla_\theta \log \pi^{(i)}_\theta\big(\vphi^{(i)} \mid s, \bel{i}\big)\, Q^{(i)}\big([s,\bel{i}], \vphi^{(i)}\big) \Big],
\label{eq:lifted-pg}
\end{equation}

We first note that under CTDE the belief is a deterministic function of the state and the \emph{same} parameter vector,
\[
\bel{i} \equiv \vec{\vphi}^{<i}_\theta(s) = \big[\pi^{(1)}_\theta(s),\, \pi^{(2)}_\theta(s, \bel{2}),\, \ldots\big],
\]
so $\theta$ influences the objective both through agent $i$'s conditional policy \emph{and} through agent $i$'s input. Eq.~\ref{eq:approx_pg_objective} keeps only the first pathway. This is made explicit with a stop-gradient operator $\sg[\cdot]$.

\begin{definition}[Belief-detached surrogate]
\label{def:surrogate}
\begin{equation}
\tilde{J}^{AC}(\theta)
\;\coloneqq\;
\sum_{i=1}^{N}
\E_{[s,\bel{i}] \sim \dgp}
\E_{a^{(i)} \sim \pi^{(i)}_\theta(\cdot \mid s,\, \sg[\bel{i}])}
\Big[ Q^{(i)}\big([s, \sg[\bel{i}]], a^{(i)}\big) \Big],
\end{equation}
where $Q^{(i)}([s,b],a)$ denotes the chained critic evaluated at a realized action $a$ with the belief input held at its sampled (behavior-policy) value.
\end{definition}

\paragraph{Characterizing the omitted term}

We now state what separates $\nabla_\theta \tilde{J}$ from the true joint gradient.

\begin{equation}
\nabla_\theta \log \vec{\pi}_\theta(\va \mid s)
= \sum_{i=1}^{N}
\left.\partial_\theta \log \pi^{(i)}_\theta\big(a^{(i)} \mid s, \bel{i}\big)\right|_{\bel{i}\ \mathrm{fixed}}
\;+\;
\sum_{i=2}^{N}
\Big(\partial_\theta \vec{\vphi}^{<i}_\theta(s)\Big)^{\!\top} \nabla_{b}\, \log \pi^{(i)}_\theta\big(a^{(i)} \mid s, \bel{i}\big) .
\label{eq:chainrule}
\end{equation}
Consequently
\begin{equation}
\nabla_\theta J(\vec{\pi}_\theta)
= \nabla_\theta \tilde{J}^{AC}(\theta)
+ \E_{s \sim \dgp}\sum_{i=2}^{N}
\Big(\partial_\theta \vec{\vphi}^{<i}_\theta(s)\Big)^{\!\top}
\E_{\va \sim \vec{\pi}_\theta}\Big[ \nabla_b \log \pi^{(i)}_\theta\big(a^{(i)} \mid s, \bel{i}\big)\; A\big(s,\va\big) \Big].
\label{eq:correction}
\end{equation}

Overall, the surrogate objective $\tilde J^{AC}$ in Eq.~\ref{eq:approx_pg_objective} is always  exact for $i=N$ since Eq.~\ref{eq:affine-claim} holds with equality. For $i<N$, it is the exact gradient of a surrogate objective which detaches belief inputs.

\subsection{Details on Advantage Computation for Agent-Chained Proximal Policy Optimization~(ACPPO)}
\label{appendix:advantage_computation}
The advantage can be written as an exponentially-weighted sum over the TD residuals,
$$A_{t}^{(1)}=  
\sum_{j=1}^N  (\gamma' \lambda')^{j-1} \zeta^{(j)}_t + 
\sum_{k=1
}^\infty \sum_{j=1}^N (\gamma' \lambda')^{k N+j-1} \zeta_{t+k}^{(j)}$$

$$A_{t}^{(2)}=  
\sum_{j=2}^N  (\gamma' \lambda')^{j-2} \zeta^{(j)}_t + 
\sum_{k=1
}^\infty \sum_{j=1}^N (\gamma' \lambda')^{k N+j-2} \zeta_{t+k}^{(j)}$$
$$\vdots$$
$$A_{t}^{(N)}= 
 \zeta^{(N)}_t + 
\sum_{k=1
}^\infty \sum_{j=1}^N (\gamma' \lambda')^{k N+j-N} \zeta_{t+k}^{(j)}$$

$$ \therefore A_{t}^{(i)}= 
\sum_{j=i}^N  (\gamma' \lambda')^{j-i} \zeta^{(j)}_t + 
\sum_{k=1
}^\infty \sum_{j=1}^N (\gamma' \lambda')^{k N+j-i} \zeta_{t+k}^{(j)}.$$

%==========================================================================================
% Pseudocodes
%==========================================================================================
\section{Pseudocodes}
We first present the pseudocode for ACPI which is defined directly on the action space consisting of 1-step policies $\varphi^{(i)}$. This is a straightforward policy iteration procedure defined on the AC-BMDP.

\label{appendix:pseudocodes}
\begin{algorithm}[h]
    \caption{Agent-Chained Policy Iteration~(ACPI)}
    \begin{algorithmic}[1]
        % \STATE \textbf{Input}:  $\mathcal{M}$ : FAB-MDP
            \STATE Randomly initialize $\vec{\pi}=\left({\pi}^{(1)}, \ldots, {\pi}^{(N)}\right)$ and $\vec{Q}=\left({Q}^{(1)}, \ldots, Q^{(N)}\right)$.
             % \STATE Randomly initialize 
            \WHILE{$\vec{\pi}$ not converged}
            \WHILE{$\vec{Q}$ not converged}
                \STATE \textcolor{blue}{\# Policy Evaluation}
                \STATE $$\begin{aligned}
                    \forall &i \in \{1, \dots, N-1 \}, s, b^{(i)}, \varphi^{(i)}, \\
                    &Q^{(i)}([s, b^{(i)}], \varphi^{(i)})  
                        \leftarrow \gamma' \mathbb{E}_{\substack{b^{(i+1)} = T([s, b^{(i)}], \varphi^{(i)})\\
                        \varphi^{(i+1)} \sim \pi^{(i+1)}(\cdot \mid s, b^{(i+1)})
                        }}\left[Q^{(i+1)}\left([s, b^{(i+1)}], \varphi^{(i+1)}\right) \right]
                    \end{aligned}$$
                    \STATE $$\begin{aligned}
                    \forall s, b^{(N)}, &\varphi^{(N)}, \\
                    &Q^{(N)}([s, b^{(N)}], \varphi^{(N)})  \\& \quad  \leftarrow R\left([s, b^{(N)}], \varphi^{(N)}\right)
                        + \gamma' \mathbb{E}_{\substack{s' \sim T(\cdot \mid [s, b^{(N)}], \varphi^{(N)})\\ \varphi^{(1)} \sim \pi^{(1)}(\cdot \mid s')
                        }}\left[Q^{(1)}\left(s', \varphi^{(1)}\right) \right] \\
                    \end{aligned}$$
            \ENDWHILE
                \STATE \textcolor{blue}{\# Policy Improvement}
                \STATE 
                     $$
                    \begin{aligned}
                    \forall i \in \{1,\dots,N\}&, s, b^{(i)},\\
                    &\pi^{(i)}(s, b^{(i)})  
                        \leftarrow \arg \max_{\varphi^{(i)}} Q^{(i)}\left([s, b^{(i)}], \varphi^{(i)}\right) 
                    \end{aligned}
                    $$
            \ENDWHILE
    \end{algorithmic}
    \label{pseudocode:agent_chained_policy_iteration}
\end{algorithm}

As we showed in Theorem~\ref{theorem:max_q_phi_equals_max_q_a}, the AC-BMDP has a special structure which ensures that it is sufficient to consider the (finite) space of deterministic action distributions $\varphi^{(i)}$ for finding an optimal policy. Thus, we can also define an equivalent policy iteration procedure over the action space $\delta_{a^{(i)}}$ (Algorithm~\ref{pseudocode:agent_chained_policy_iteration_delta}).

% -----------------------------------------
% in terms of delta
% -----------------------------------------
\begin{algorithm}[h]
    \caption{Agent-Chained Policy Iteration (Deterministic action distribution $\delta_{a^{(i)}}$)}
    \begin{algorithmic}[1]
        % \STATE \textbf{Input}:  $\mathcal{M}$ : FAB-MDP
            \STATE Randomly initialize $\vec{\pi}=\left({\pi}^{(1)}, \ldots, {\pi}^{(N)}\right)$ and $\vec{Q}=\left({Q}^{(1)}, \ldots, Q^{(N)}\right)$.
             % \STATE Randomly initialize 
            \WHILE{$\vec{\pi}$ not converged}
            \WHILE{$\vec{Q}$ not converged}
                \STATE \textcolor{blue}{\# Policy Evaluation}
                \STATE $$\begin{aligned}
                    \forall &i \in \{1, \dots, N-1 \}, s, \vec{a}^{<i}, a^{(i)}, \\
                    &Q^{(i)}([s, b^{(i)}=\vec{a}^{<i}], \delta_{a^{(i)}})  
                        \leftarrow \gamma' \mathbb{E}_{\substack{b^{(i+1)} = T([s, b^{(i)}=\vec{a}^{<i}], \delta_{a^{(i)}})\\\delta_{a^{(i+1)}} \sim \pi^{(i+1)}(\cdot \mid s, b^{(i+1)})
                        }}\left[Q^{(i+1)}\left([s, b^{(i+1)}], \delta_{a^{(i+1)}}\right) \right]
                    \end{aligned}$$
                    \STATE $$\begin{aligned}
                    \forall s, \vec{a}^{<N},  a^{(N)}, \\
                    &Q^{(N)}([s, b^{(N)}=\vec{a}^{<N}], \delta_{a^{(N)}})  \\& \quad  \leftarrow R\left([s, \vec{a}^{<N}], a^{(N)}\right)
                        + \gamma' \mathbb{E}_{\substack{s' \sim T(\cdot \mid [s, b^{(N)}=\vec{a}^{<N}], \delta_{a^{(N)}})\\\delta_{a^{(1)}} \sim \pi^{(1)}(\cdot \mid s')
                        }}\left[Q^{(1)}\left(s', \delta_{a^{(1)}}\right) \right] \\
                    \end{aligned}$$
            \ENDWHILE
                \STATE \textcolor{blue}{\# Policy Improvement}
                \STATE 
                     $$
                    \begin{aligned}
                    \forall i \in \{1,\dots,N\}, &s, \vec{a}^{<i},\\
                    &\pi^{(i)}(s, \vec{a}^{<i})  
                        \leftarrow \arg \max_{a^{(i)}} Q^{(i)}\left([s, \vec{a}^{<i}], \delta_{a^{(i)}}\right) 
                    \end{aligned}
                    $$
            \ENDWHILE
    \end{algorithmic}
    \label{pseudocode:agent_chained_policy_iteration_delta}
\end{algorithm}
% -----------------------------------------

Finally, we present ACPO which is a practical algorithm that aims to approximate ACPI via the PPO objective.

\begin{algorithm}[h]
    \caption{Agent-Chained Proximal Policy Optimization (ACPPO)}
    \begin{algorithmic}[1]
        \STATE \textbf{Initialize}: Actor networks ${\theta}_0 =
    [\theta_0^{(1)},\dots,\theta_0^{(N)}]$, Critic networks $\vec{\psi}_0 =
    [\psi^{(1)}_0,\dots,\psi^{(N)}_0]$, and action belief networks $\vec{\phi}_0=[\phi_0^{(2)}, \dots, \phi_0^{(N)}]$.
        
        \WHILE{$t \leq t_{max}$}
        \STATE Compute beliefs autoregressively: $b^{(i)} \leftarrow [\pi^{(1)}_\theta(s), \dots, \pi^{(i-1)}_\theta(s, b^{(i-1)})], \forall i =1, \dots N$ 
        \STATE Collect transitions $\left(s_t, \{a_t^{(i)}\}_{i=1}^N, r_t, s_{t+1}\right)$ by running the joint policy $\vec{\pi}_{{\theta}_k}$.
        
        \STATE Compute advantages after each episode: $\forall i=1, \dots,N$,
        $$  
        A_{t}^{(i)}= 
            \sum_{j=i}^N  (\gamma' \lambda')^{j-i} \zeta^{(j)}_t + 
            \sum_{k=1}^\infty \sum_{j=1}^N (\gamma' \lambda')^{k N+j-i} \zeta_{t+k}^{(j)}
        $$
        
        \STATE Update actors with the PPO-Clip objective: $\forall i=1\dots, N,$
        
        $$
\begin{aligned}
\mathbb{E}_{a^{(i)}_t \sim \pi_{\theta_{old}}^{(i)}(\cdot \mid s_{t}, b^{(i)}_t)}
[\min(& w^{(i)}(s_{t},b^{(i)}_t,  a^{(i)}_t)
A_{t}^{(i)}, \text{clip}( w^{(i)}(s_{t},b^{(i)}_t,  a^{(i)}_t), 1-\epsilon, 1+\epsilon)A_{t}^{(i)} )]
\end{aligned}
$$
        
        where 
        $ w^{(i)}(s_{t},b^{(i)}_t,  a^{(i)}_t):=
        \frac{\pi_{\theta}^{(i)}(a^{(i)}_t \mid s_{t}, b^{(i)}_t)}
        {\pi_{\theta_{old}}^{(i)}(a^{(i)}_t \mid s_{t}, b^{(i)}_t)}$.
        
        \STATE Update decentralized critics: $\forall i=1\dots, N,$
        
        $$
        \psi^{(i)}_{new}=\arg \min _{\psi^{(i)}} 
        \mathbb{E}\left[
        \left(V_{\psi^{(i)}}^{(i)}\left(s_t, b^{(i)}_t\right)-\hat{R}_t^{(i)}\right)^2
        \right]
        $$
        
        \STATE \textcolor{blue}{\# Action Belief Network Training (Optional)}
        
        \FOR{each agent $i=2,\dots,N$}
            \FOR{each preceding agent $j<i$}
            
            \STATE Compute true distribution
            $$
            \varphi^{(j)}_t = \pi_{\theta}^{(j)}(\cdot \mid s_t, b^{(j)}_t)
            $$
            
            \STATE Compute predicted distribution
            $$
            \hat{\varphi}^{(j)}_t =
            \hat{\pi}_{\phi}^{(j)}(\cdot \mid s_t, b^{(j)}_t)
            $$
            
            \STATE Update $\hat{\pi}^{(j)}_{\phi}$ by minimizing
            
            $$
            \mathcal{L}_{KL}^{(j)} =
            D_{KL}(\varphi^{(j)}_t \parallel \hat{\varphi}^{(j)}_t)
            $$
            
            \ENDFOR
        \ENDFOR
        
        \ENDWHILE
    \end{algorithmic}
    \label{pseudocode:seperate_parameter}
\end{algorithm}

%%%%%%%%%%%%%%%%%%%%%%%%%%%%%%%%%%%%%%
% td3
%%%%%%%%%%%%%%%%%%%%%%%%%%%%%%%%%%%%%%

\section{Agent-Chained Twin Delayed Deterministic Policy Gradient~(ACTD3)}

\label{appendix:ac_td3}

We instantiate the ACPO framework with the Twin Delayed DDPG~(TD3) algorithm~\citep{DBLP:conf/icml/FujimotoHM18} for continuous action spaces. In AC-TD3, each agent $i$ maintains an actor $\pi^{(i)}_\theta$ that maps its augmented state $[s, b^{(i)}]$ to a continuous action, along with a pair of twin critics $Q^{(i)}_{\psi,1}, Q^{(i)}_{\psi,2}$ that follow the agent-chained structure from Definition~\ref{def:bellman_operators}. Target networks $\bar{\pi}^{(i)}_\theta$ and $\bar{Q}^{(i)}_{\psi,1}, \bar{Q}^{(i)}_{\psi,2}$ are maintained via Polyak averaging.

\paragraph{The role of $\varphi^{(i)}$ (unnoised actions) vs. $a^{(i)}$ (noised actions).}
Since TD3 policies are deterministic, the action distribution $\varphi^{(i)} = \pi^{(i)}_\theta(s, b^{(i)})$ is simply the deterministic output of the actor network (i.e., the unnoised action). Independent Gaussian noise $\epsilon^{(i)} \sim \mathcal{N}(0, \sigma^2 I)$ is added privately for exploration:
$$a^{(i)} = \varphi^{(i)} + \epsilon^{(i)},$$
where $a^{(i)}$ is the noised action that interacts with the environment. Since the noise is private and independent across agents, only $\varphi^{(i)}$ is publicly available and used to construct the beliefs of subsequent agents: $b^{(i+1)} = [b^{(i)}, \varphi^{(i)}]$.

\paragraph{Critic objectives.}
Following the agent-chained Bellman operators (Definition~\ref{def:bellman_operators}), the critic loss for each agent $i$ uses a clipped double-Q target $\bar{Q}^{(i+1)} = \min(\bar{Q}^{(i+1)}_{\psi,1}, \bar{Q}^{(i+1)}_{\psi,2})$. The 1-step targets are:

for $i=1, \dots, N-1$,
$$\begin{aligned}
J_{Q}^{(i)}&(\psi)  = \mathbb{E}_{\substack{\left(s, b^{(i)}, \varphi^{(i)}\right) \sim \mathcal{D} \\ \varphi^{(i+1)} \sim \pi^{(i+1)}_{\theta}\left(\cdot \mid s,b^{(i)}, \varphi^{(i)}\right) }}
\left[\left(Q_{\psi}^{(i)}([s,  b^{(i)}], \varphi^{(i)})-y^{(i)}\right)^2\right]\\
&\text{s.t. }y^{(i)}=\gamma'\bar{Q}^{(i+1)}([s,  b^{(i)}, \varphi^{(i)}],{\varphi}^{(i+1)})
\end{aligned}
$$

$$\begin{aligned}
J_{Q}^{(N)}&(\psi)  = \mathbb{E}_{\substack{\left(s, b^{(N)}, \varphi^{(N)}, r, s'\right) \sim \mathcal{D} \\ {\varphi^{(1)}} \sim {\pi}_{\theta}^{(1)}\left(\cdot \mid s'\right) }}
\left[\left(Q_{\psi}^{(N)}([s,  b^{(N)}], \varphi^{(N)})-y^{(N)}\right)^2\right]\\
&\text{s.t. }y^{(N)}=r+\gamma'\bar{Q}^{(1)}(s', {\varphi}^{(1)})
\end{aligned}
$$

For practical implementations, it is often useful to consider $k$-step returns. 

$$\begin{aligned}
J_{Q}^{(i)}&(\psi)  = \mathbb{E}_{\substack{\left(s_t, b^{(i)}_t, \varphi^{(i)}_t, \{r_{t+j}\}_{j=0}^k, s_{t+k+1}\right) \sim \mathcal{D} \\ \vec{\varphi}_{t+k+1} \sim \vec{\pi}_{\theta}\left(\cdot \mid s_{t+k+1}\right) }}
\left[\left(Q_{\psi}^{(i)}([s_t,  b^{(i)}_t], \varphi^{(i)}_t)-{(\gamma')}^{N-i} y^{(i)}_t\right)^2\right]\\
&\text{s.t. }y^{(i)}_t=r_{t} + \gamma r_{t+1} + \cdots + \gamma^{k}r_{t+k} +\gamma^{k+1}Q_{\psi}^{(i+1)}([s_{t+k+1}, b^{(i+1)}_{t+k+1} ], \varphi^{(i+1)}_{t+k+1}) 
\end{aligned}
$$

We find that using $k$-step returns in this way works better in practice as each agent now has a dense reward signal in the targets (rather than only the last agent). We note that $\gamma$ denotes the discount factor in the original MMDP and $\gamma'=\gamma^{1/N}$. The ${(\gamma')}^{N-i} $ discount is to adjust the micro step to match with the last agent. For example, for agent 1, the reward given at the current timestep is ${(\gamma')}^{N-1} r_t$.

\paragraph{Actor objective and independent updates.}
Each actor is trained to maximize its own critic, with gradients flowing through the policy:

$$
\begin{aligned}
    J_{\pi}^{(i)} (\theta
    ) = \mathbb{E}_{s, b^{(i)} \sim \mathcal{D}} \left[ Q^{(i)}_{\psi,1}\left([s, b^{(i)}], \pi^{(i)}_\theta(s, b^{(i)})\right) \right]
\end{aligned}
$$

Since each agent has its own decentralized critic, all $N$ actor updates can be performed independently (unlike sequential methods such as HATD3). Following standard TD3, actor and target network updates are delayed, occurring once every $d$ critic updates.

\begin{algorithm}[h]
    \caption{Agent-Chained TD3~(ACTD3)}
    \begin{algorithmic}[1]
        \STATE \textbf{Input}: $N$ agents, discount $\gamma$, Polyak rate $\rho$, target noise $\sigma_{\text{target}}$, noise clip $c$,  $k$-step return horizon $k$
        \STATE \textbf{Initialize}: Actor networks $\pi^{(i)}_\theta$ with parameters $\theta^{(i)}$, twin critics $Q^{(i)}_{\psi,1}, Q^{(i)}_{\psi,2}$ with parameters $\psi^{(i)}$, target networks $\bar{\pi}^{(i)}_\theta \leftarrow \pi^{(i)}_\theta$, $\bar{Q}^{(i)}_{\psi} \leftarrow Q^{(i)}_{\psi}$, replay buffer $\mathcal{D}$
        \STATE Set serialized discount $\gamma' \leftarrow \gamma^{1/N}$

        \FOR{each environment step}
        \STATE \textcolor{blue}{\#  Data Collection }
        \FOR{each agent $i = 1, \dots, N$ \textbf{in parallel}}
            \STATE Compute beliefs autoregressively: $b^{(i)} \leftarrow [\pi^{(1)}_\theta(s), \dots, \pi^{(i-1)}_\theta(s, b^{(i-1)})]$ 
            \STATE Compute unnoised action: $\varphi^{(i)} \leftarrow \pi^{(i)}_\theta(s, b^{(i)})$
            \STATE Compute noised action: $a^{(i)} \leftarrow \varphi^{(i)} + \epsilon^{(i)}, \quad \epsilon^{(i)} \sim \mathcal{N}(0, \sigma^2 I)$
        \ENDFOR
        \STATE Execute joint action $\vec{a} = [a^{(1)}, \dots, a^{(N)}]$, observe $r, s'$
        \STATE Store $(s, \vec{\varphi}, \vec{a}, r, s')$ in $\mathcal{D}$

        \STATE
        \STATE \textcolor{blue}{\#  Training }
        \STATE Sample mini-batch $\{(s_t,\vec{\varphi}_t, \vec{a}_t, \{r_{t+j}\}_{j=0}^k, s_{t+k+1})\}$ from $\mathcal{D}$
        \STATE
        \STATE \textcolor{blue}{\# Critic update }
        \FOR{each agent $i = 1, \dots, N$ \textbf{in parallel}}
            \STATE Compute smoothed target action: $\tilde{\varphi}^{(i+1)} \leftarrow \text{clip}(\bar{\varphi}^{(i+1)} + \text{clip}(\epsilon, -c, c), a_{\text{low}}, a_{\text{high}})$
            \STATE Compute $k$-step target: 
            \STATE $y^{(i)}_t \leftarrow \sum_{j=0}^{k} \gamma^j r_{t+j} + \gamma^{k+1} \min_{m} \bar{Q}^{(i+1)}_{\psi, m}([s_{t+k+1}, {b}^{(i+1)}_{t+k+1}], \tilde{\varphi}^{(i+1)})$
            \STATE Scale target: $\hat{y}^{(i)}_t \leftarrow (\gamma')^{N - i} \cdot y^{(i)}_t$
            \STATE Update critics: $ \nabla_{\psi^{(i)}} \sum_{m=1}^{2} \left\| Q^{(i)}_{\psi, m}([s_t, b^{(i)}_t], \varphi^{(i)}_t) - \hat{y}^{(i)}_t \right\|^2$
        \ENDFOR

        \STATE
        \STATE \textcolor{blue}{\# Actor update (all agents in parallel)}
        \FOR{each agent $i = 1, \dots, N$ \textbf{in parallel}}
            \STATE $\nabla_{\theta^{(i)}} \mathbb{E}\!\left[ Q^{(i)}_{\psi,1}([s, b^{(i)}], \pi^{(i)}_{\theta}(s, b^{(i)})) \right]$
        \ENDFOR
        \STATE \textcolor{blue}{\# Soft update all target networks}
        \FOR{each agent $i = 1, \dots, N$}
            \STATE $\bar{\theta}^{(i)} \leftarrow \rho \theta^{(i)} + (1 - \rho)\bar{\theta}^{(i)}$,\quad $\bar{\psi}^{(i)} \leftarrow \rho \psi^{(i)} + (1 - \rho)\bar{\psi}^{(i)}$
        \ENDFOR

        \ENDFOR
    \end{algorithmic}
    \label{pseudocode:actd3}
\end{algorithm}

%%%%%%%%%%%%%%%%%%%%%%%%%%%%%%%%%%%%%%
% Off-Policy Variant
%%%%%%%%%%%%%%%%%%%%%%%%%%%%%%%%%%%%%%

%%%%%%%%%%%%%%%%%%%%%%%%%%%%%%%%%%%%%%%%%%%%%%%%%%%%%%%%%%%%%%%%%%%
% Justification for final loss
%%%%%%%%%%%%%%%%%%%%%%%%%%%%%%%%%%%%%%%%%%%%%%%%%%%%%%%%%%%%%%%%%%%
\clearpage
\newpage

\section{Learning Curve for SMACv2}

\begin{figure*}[h!]%[ht]
\begin{center}
 \centerline{\includegraphics[width=\textwidth]{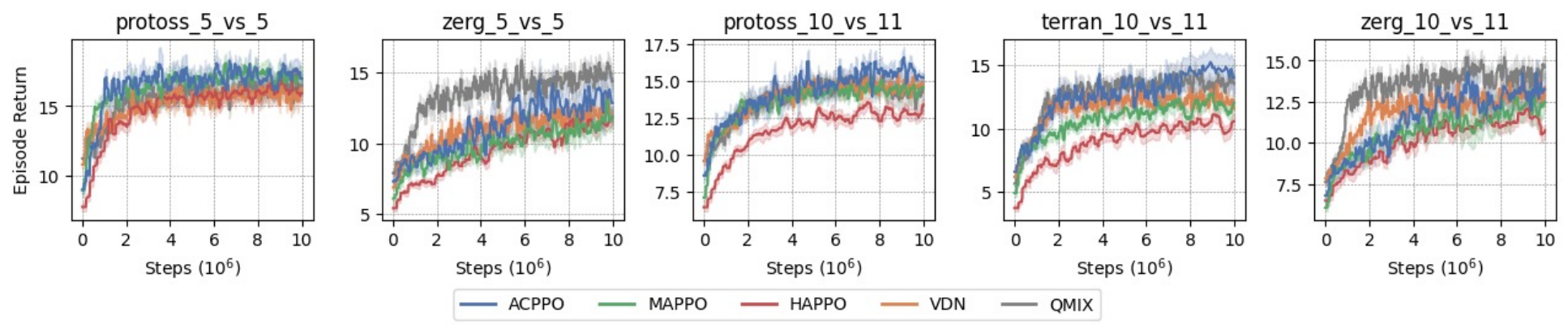}}
% \vspace{-0.5cm}
\caption{Return for SMACv2 with mean and standard error over 5 seeds.}
\label{fig:return_curve_smacv2}
\end{center}
\vspace{-0.8cm}
\end{figure*}

\section{Off-Policy Comparison}
\label{appendix:off_policy_results}
\begin{figure*}[h!]%[ht]
\begin{center}
 \centerline{\includegraphics[width=\textwidth]{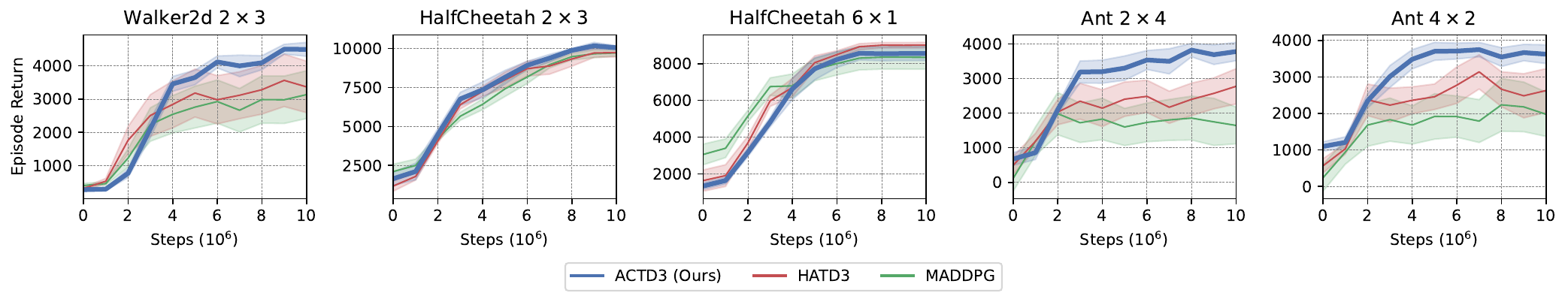}}
% \vspace{-0.5cm}
\caption{Comparison of Off-Policy Algorithms on MA-MuJoCo~(Gymnasium).}
\label{fig:return_curve_mamujoco_off_policy}
\end{center}
\vspace{-0.8cm}
\end{figure*}

%TODO Refine Explanation of performance gap
We further evaluate an off-policy variant of ACPO, referred to as ACTD3, following the on-policy experiments. ACTD3 applies the agent-chaining mechanism to TD3~\citep{DBLP:conf/icml/FujimotoHM18}. As off-policy baselines, we compare against MADDPG~\citep{lowe2017maddpg} and HATD3~\citep{zhong2024haml}. As shown in Figure~\ref{fig:return_curve_mamujoco_off_policy}, ACTD3 achieves performance comparable to or better than the baseline methods across different tasks. The performance gap is the highest for Ant $4 \times 2$ which is the most challenging domain. 

We note that in HalfCheetah tasks, there is a substantial gap in episode returns between the on-policy and off-policy experiments. Similar discrepancies have been reported in prior works~\citep{christodoulou2019softactorcriticdiscreteaction,spinningup} in single-agent experiments on MuJoCo. We attribute this discrepancy to the fundamental differences between on-policy and off-policy learning paradigms. On-policy methods update policies using trajectories generated by the current policy, whereas off-policy methods learn from transitions sampled from a replay buffer containing experiences collected by past policies. The ability to reuse past experiences enables off-policy methods to perform multiple gradient updates per environment interaction, which often results in better sample complexity. 

Finally, we note that MA-MuJoCo~(Gymnasium) is a more recent benchmark in comparison to MA-MuJoCo~(Gym) where the underlying physics engine uses Gymnasium/MuJoCo-v5  and Gym/MuJoCo-v2, respectively. As previous work such as \citet{zhong2024haml} used the older version of MA-MuJoCo~(Gym), the results from their paper cannot be directly compared to ours.

%%%%%%%%%%%%%%%%%%%%%%%%%%%%%%%%%%%%%%
% details on practical implementation
%%%%%%%%%%%%%%%%%%%%%%%%%%%%%%%%%%%%%%

%%%%%%%%%%%%%%%%%%%%%%%%%%%%%%%%%%%%%%
% RUNTIME vs RETURN
%%%%%%%%%%%%%%%%%%%%%%%%%%%%%%%%%%%%%%

\section{Wall-Clock Training Time }
\label{appendix:runtime_statistics}

\paragraph{Runtime Statistics}
% Figure~\red{TODO} shows the training time results for ACPO as well as the baselines. It
\begin{figure}[h]
  \centering
    \centering
    \includegraphics[width=0.5\linewidth]{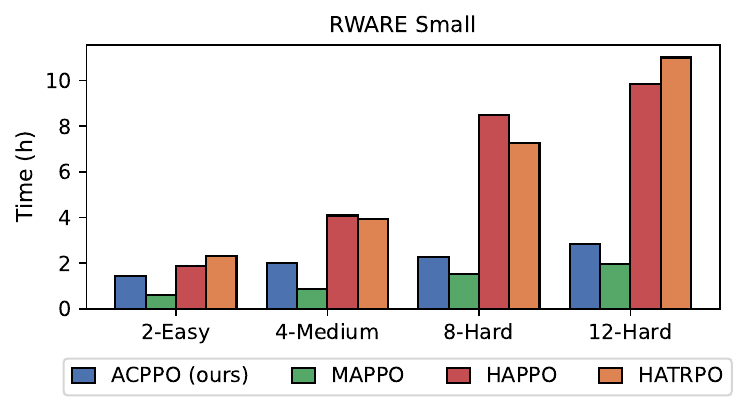}
    \caption{Wall-Clock Training Time on RWARE (5M timesteps)}
    \label{fig:runtime}
\end{figure}
In Figure~\ref{fig:runtime}, we show the wall-clock training time of running MAPPO, HAPPO, HATRPO and ACPO on RWARE for 5M timesteps. With the number of agents increasing from 2 to 12, the runtime of ACPO remains comparable to MAPPO, and is substantially faster than alternating policy optimization methods such as HAPPO and HATRPO, which require alternately updating one agent at a time. Overall, ACPO significantly outperforms baselines in terms of return with only minimal additional computational overhead compared to MAPPO.

%%%%%%%%%%%%%%%%%%%%%%%%%%%%%%%%%%%%%%%%%%%%%%%%%%%%%%%%%%%%%%%%%%%
% Hyperparameter details
%%%%%%%%%%%%%%%%%%%%%%%%%%%%%%%%%%%%%%%%%%%%%%%%%%%%%%%%%%%%%%%%%%%
\section{Hyperparameter Details}
\label{appendix:hyperparameter_details}
For a fair comparison, we set the network type (MLP or GRU) and hidden layer size to be consistent across all algorithms. The total number of parameters used by ACPPO are set to be similar to that of other algorithms. The design choices follow the experimental setups of \citet{zhong2024haml} and \citet{papoudakis2021epymarl}.
The discount factor $\gamma$ is fixed, as it is inherent to the MMDP rather than a tunable hyperparameter. In contrast ACPO employs serialization, where the advantage is computed using $\gamma' = \gamma^{1/N}$.

\begin{table}[h!]
\vspace{5pt}
\caption{Common Parameters for Baseline Algorithms}
\label{table:hyperparameters}
\centering
\scriptsize{
\begin{tabular}{c|c|c|c|c}
\toprule
Parameter & RWARE & MA-MuJoCo (on-policy) &  MA-MuJoCo (off-policy) & SMACv2 \\
\midrule
Network  & MLP & MLP &  MLP & GRU \\
Hidden Sizes (48)  & [128, 256] &  [256, 256] & [128, 128] & [64] \\
$\gamma$ & 0.99 & 0.99 & 0.99 & 0.99 \\
\bottomrule
\end{tabular}
}
\vskip -0.1in
\end{table}

\begin{table}[h!]
\vspace{5pt}
\caption{Common Parameters for ACPO}
\label{table:hyperparameters_acpo}
\centering
\scriptsize{
\begin{tabular}{c|c|c|c|c}
\toprule
Parameter & RWARE & MA-MuJoCo (on-policy) &  MA-MuJoCo (off-policy) & SMACv2 \\
\midrule
Network  & MLP & MLP &  MLP & GRU \\
Hidden Sizes  & [128, 128] &  [256, 256] & [128, 128] & [48] \\
Hidden Sizes (Action Belief Network)  & [64, 64] &  $-$ & $-$ & [48, 48] \\
$\gamma$ & 0.99 & 0.99 & 0.99 & 0.99 \\
\bottomrule
\end{tabular}
}
\vskip -0.1in
\end{table}

For all baselines, we use the reported hyperparameters from 
\citet{papoudakis2021epymarl} for RWARE, \citet{ellis2023smacv2} for SMACv2 and tune additional hyperparameters for HAPPO and HATRPO for MA-MuJoCo~(Gymnasium) on top of the default hyperparameters from \citet{zhong2024haml} for MA-MuJoCo~(Gym). The full set of hyperparameters are provided in our code at the following link: \url{https://github.com/dematsunaga/agent-chained-policy-optimization}.

\section{Computational Resources}
\label{appendix:computational_resources} 

% small-2-easy / small-4-medium / small-8-hard / small-12-hard
For RWARE experiments, we utilized a single NVIDIA GeForce RTX 3090 graphics processing unit (GPU). 
The training times varied across algorithms and the number of agents.
For the 2,4,8, and 12-agent environments, ACPO took 14H, 19H, 20H and 23H, respectively. 
The corresponding times were MAPPO (4H, 6H, 13H, 16H), HAPPO (16H, 32H, 66H, 82H) and HATRPO (18H, 30H, 59H, 90H).

\end{document}